\newcommand{\cX}{\mathcal{X}}
\newcommand{\cZ}{\mathcal{Z}}
\newcommand{\cL}{\mathcal{L}}
\newcommand{\bR}{\mathbb{R}}
\newcommand{\bN}{\mathbb{N}}
\DeclareMathOperator*{\expect}{{\huge \mathbb{E}}}
\DeclareMathOperator*{\argmin}{arg\,min}
\renewcommand{\phi}{\varphi}
\renewcommand{\epsilon}{\varepsilon}
\newcommand{\U}{\mathcal{U}}
\newcommand{\1}{\mathbb{I}}
\newcommand{\beq}{\begin{equation}}
\newcommand{\eeq}{\end{equation}}
\newcommand{\beqa}{\begin{eqnarray}}
\newcommand{\eeqa}{\end{eqnarray}}
\newcommand{\beqan}{\begin{eqnarray*}}
\newcommand{\eeqan}{\end{eqnarray*}}
\newtheorem{proposition}{Proposition}
\icmltitlerunning{Autoregressive Quantile Networks for Generative Modeling}
\begin{document}

\twocolumn[
\icmltitle{Autoregressive Quantile Networks for Generative Modeling}



\icmlsetsymbol{equal}{*}

\begin{icmlauthorlist}
\icmlauthor{Georg Ostrovski}{equal,dm}
\icmlauthor{Will Dabney}{equal,dm}
\icmlauthor{R\'emi Munos}{dm}
\end{icmlauthorlist}

\icmlaffiliation{dm}{DeepMind, London, UK}

\icmlcorrespondingauthor{Georg Ostrovski}{ostrovski@google.com}
\icmlcorrespondingauthor{Will Dabney}{wdabney@google.com}

\icmlkeywords{autoregressive, generative model, machine learning, quantile regression}

\vskip 0.3in
]



\printAffiliationsAndNotice{\icmlEqualContribution} 

\begin{abstract}
We introduce autoregressive implicit quantile networks (AIQN), a fundamentally different approach to generative modeling than those commonly used, that implicitly captures the distribution using quantile regression. AIQN is able to achieve superior perceptual quality and improvements in evaluation metrics, without incurring a loss of sample diversity. The method can be applied to many existing models and architectures. In this work we extend the PixelCNN model with AIQN and demonstrate results on CIFAR-10 and ImageNet using Inception score, FID, non-cherry-picked samples, and inpainting results. We consistently observe that AIQN yields a highly stable algorithm that improves perceptual quality while maintaining a highly diverse distribution.
\end{abstract}

\section{Introduction}
\label{sec:introduction}

There has been a staggering increase in progress on generative modeling in recent years, built largely upon fundamental advances such as generative adversarial networks \cite{goodfellow2014generative}, variational inference \cite{kingma2013auto}, and autoregressive density estimation \cite{vandenoord16pixel}. These have led to breakthroughs in state-of-the-art generation of natural images \cite{karras2017progressive} and audio \cite{van2016wavenet}, and even been used for unsupervised learning of disentangled representations \cite{higgins2016beta,chen2016infogan}. These domains often have real-valued distributions with underlying metrics; that is, there is a domain-specific notion of similarity between data points. This similarity is ignored by the predominant work-horse of generative modeling, the Kullback-Leibler (KL) divergence. Progress is now being made towards algorithms that optimize with respect to these underlying metrics \cite{wgan,bousquet2017optimal}.

In this paper, we present a novel approach to generative modeling, that, while strikingly different from existing methods, is grounded in the well-understood statistical methods of \textit{quantile regression}. Unlike the majority of recent work, we approach generative modeling without the use of the KL divergence, and without explicitly approximating a likelihood model. Like GANs, in this way we produce an implicitly defined model, but unlike GANs our optimization procedure is inherently stable and lacks degenerate solutions which cause loss of diversity and mode collapse.

Much of the recent research on GANs has been focused on improving stability \cite{radford2015unsupervised,wgan,daskalakis2017training} and sample diversity \cite{gulrajani2017improved,salimans2016improved,salimans2018otgan}. By stark contrast, methods such as PixelCNN \cite{van2016conditional} readily produce high diversity, but due to their use of KL divergence are unable to make reasonable trade-offs between likelihood and perceptual similarity \cite{theis2015note,bellemare17cramer,bousquet2017optimal}.

Our proposed method, \textit{autoregressive implicit quantile networks} (AIQN), combines the benefits of both: a loss function that respects the underlying metric of the data leading to improved perceptual quality, and a stable optimization process leading to highly diverse samples. While there has been an increasing tendency towards complex architectures \cite{chen2017pixelsnail,salimans2017pcnn} and multiple objective loss functions to overcome these challenges, AIQN is conceptually simple and does not rely on any special architecture or optimization techniques. Empirically it proves to be robust to hyperparameter variations and easy to optimize.

Our work is motivated by the recent advances achieved by reframing GANs in terms of optimal transport, leading to the Wasserstein GAN algorithm \cite{wgan}, as well as work towards understanding the relationship between optimal transport and both GANs and VAEs \cite{bousquet2017optimal}. In agreement with these results, we focus on loss functions grounded in perceptually meaningful metrics. We build upon recent work in distributional reinforcement learning \cite{iqn2018}, which has begun to bridge the gap between approaches in reinforcement learning and unsupervised learning. Towards a practical algorithm we base our experimental results on Gated PixelCNN \cite{van2016conditional}, and show that using AIQN significantly improves objective performance on CIFAR-10 and ImageNet 32x32 in terms of Fr\'echet Inception Distance (FID) and Inception score, as well as subjective perceptual quality in image samples and inpainting.

\section{Background}
\label{sec:background}
We begin by establishing some notation, before turning to a review of three of the most prevalent methods for generative modeling. 
Calligraphic letters (e.g.~$\cX$) denote sets or spaces, capital letters (e.g. $X$) denote random variables, and lower case letters (e.g.~$x$) indicate values. A probability distribution with random variable $X \in \cX$ is denoted $p_X \in\mathscr{P}(\cX)$, its cumulative distribution function (c.d.f.) $F_X$, and inverse c.d.f.~or quantile function $Q_X = F^{-1}_X$. When probability distributions or quantile functions are parameterized by some $\theta$ we will write $p_\theta$ or $Q_\theta$ recognizing that here we do not view $\theta$ as a random variable.

Perhaps the simplest way to approach generative modeling of a random variable $X \in \cX$ is by fixing some discretization of $\cX$ into $n$ separate values, say $x_1, \ldots, x_n \in \cX$, and parameterize the approximate distribution with $p_\theta(x_i) \propto \exp(\theta_i)$. This type of categorical parameterization is widely used, only slightly less commonly when $\cX$ does not lend itself naturally to such a partitioning. Typically, the parameters $\theta$ are optimized to minimize the Kullback-Leibler (KL) divergence between observed values of $X$ and the model $p_\theta$, $\theta^* = \argmin_\theta D_{KL}(p_X \| p_\theta)$.

However, this is only tractable when $\cX$ is a small discrete set or at best low-dimensional. A common method for extending a generative model or density estimator to multivariate distributions is to factor the density as a product of scalar-valued conditional distributions. Let $X = (X_1, \ldots, X_n)$, then for any permutation of the dimensions $\sigma\colon \bN_n \to \bN_n$,
\begin{equation}\label{eqn:conditional_factor}
p_X(x) = \prod_{i=1}^n p_{X_{\sigma(i)}}(x_{\sigma(i)} | x_{\sigma(1)}, \ldots, x_{\sigma(i-1)}).
\end{equation}

When the conditional density is modeled by a simple (e.g.~Gaussian) base distribution, the ordering of the dimensions can be crucial \cite{papamakarios2017masked}. However, it is common practice to choose an arbitrary ordering and rely upon a more powerful conditional model to avoid these problems. This class of models includes PixelRNN and PixelCNN \cite{vandenoord16pixel,van2016conditional}, MAF \cite{papamakarios2017masked}, MADE \cite{germain2015made}, and many others. Fundamentally, all these approaches use the KL divergence as their loss function.

Another class of methods, generally known as \textit{latent variable methods}, can bypass the need for autoregressive models using a different modeling assumption. Specifically, consider the Variational Autoencoder (VAE) \cite{kingma2013auto,rezende2014stochastic}, which represents $p_\theta$ as the marginalization over a latent random variable $Z \in \cZ$. The VAE is trained to maximize an approximate lower bound of the log-likelihood of the observations: 
\begin{equation*}
	\log p_\theta(x) \ge - D_{KL}(q_\theta(z | x) \| p(z)) + \expect \left[ \log p_\theta(x | z) \right].
\end{equation*}

Although VAEs are straightforward to implement and optimize, and effective at capturing structure in high-dimensional spaces, they often miss fine-grained detail, resulting in blurry images. 

Generative Adversarial Networks (GANs) \cite{goodfellow2014generative} pose the problem of learning a generative model as a two-player zero-sum game between a discriminator $D$, attempting to distinguish between $x \sim p_X$ (real data) and $x \sim p_\theta$ (generated data), and a generator $G$, attempting to generate data indistinguishable from real data. The generator is an implicit latent variable model that reparameterizes samples, typically from an isotropic Gaussian distribution, into values in $\cX$. The original formulation of GANs, 
\begin{equation*}
    \argmin_G \sup_D \left[ \expect_X \log(D(X)) + \expect_Z \log(1 - D(G(Z))) \right],
\end{equation*}
can be seen as minimizing a lower-bound on the Jensen-Shannon divergence \cite{goodfellow2014generative,bousquet2017optimal}. That is, even in the case of GANs we are often minimizing functions of the KL divergence\footnote{The Jensen-Shannon divergence is the sum of KLs between distributions $P, Q$ and their uniform mixture $M = 0.5 (P+Q)$: $\operatorname{JSD}(P||Q) = 0.5 (D_{KL}(P || M) + D_{KL}(Q || M))$.}.

Many recent advances have come from principled combinations of these three fundamental methods \cite{makhzani2015adversarial,dumoulin2016adversarially,rosca2017variational}.

\begin{figure}[t]
\begin{center}
\includegraphics[width=\textwidth]{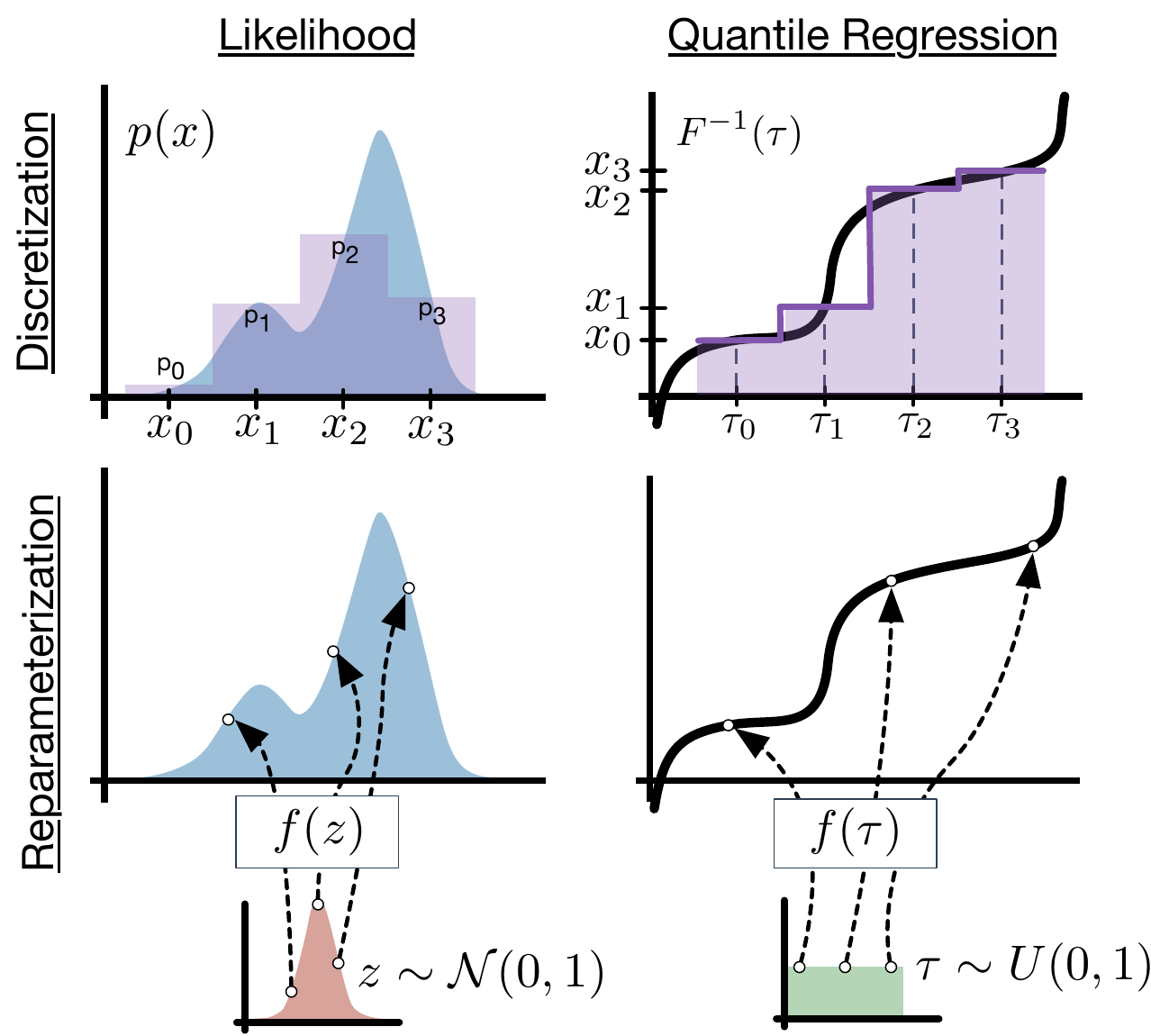}
\end{center}
\caption{Categorization of generative models by discretization vs.~reparameterization and loss functions by likelihood-based vs. quantile-regression-based.}\label{fig:estimators}
\end{figure}

\subsection{Distance Metrics and Loss Functions}

A common perspective in generative modeling is that the choice of model should encode existing metric assumptions about the domain, combined with a generic likelihood-focused loss such as the KL divergence. Under this view, the KL's general applicability and robust optimization properties make it a natural choice, and most implementations of the methods we reviewed in the previous section attempt to, at least indirectly, minimize a version of the KL.

On the other hand, as every model inevitably makes trade-offs when constrained by capacity or limited training, it is desirable for its optimization goal to incentivize trade-offs  prioritizing approximately correct solutions, when the data space is endowed with a metric supporting a meaningful (albeit potentially subjective) notion of approximation. It has been argued \cite{theis2015note,bousquet2017optimal,wgan,bellemare17cramer} that the KL may not always be appropriate from this perspective, by making sub-optimal trade-offs between likelihood and similarity. 

Indeed, many limitations of existing models can be traced back to the use of KL, and the resulting trade-offs in approximate solutions it implies. For instance, its use appears to play a central role in one of the primary failure modes of VAEs, that of blurry samples. \citet{zhao2017towards} argue that the Gaussian posterior $p_\theta(x | z)$ implies an overly simple model, which, when unable to perfectly fit the data, is forced to average (thus creating blur), and is not incentivized by the KL towards an alternative notion of approximate solution. \citet{theis2015note} emphasized that an improvement of log-likelihood does not necessarily translate to higher perceptual quality, and that the KL loss is more likely to produce atypical samples than some other training criteria. 

We offer an alternative perspective: a good model should encode assumptions about the data distribution, whereas a good loss should encode the notion of similarity, that is, the underlying metric on the data space. From this point of view, the KL corresponds to an actual absence of explicit underlying metric, with complete focus on probability.

The optimal transport metrics $W_c$, for underlying metric $c(x, x')$, and in particular the $p$-Wasserstein distance, when $c$ is an $L_p$ metric, have frequently been proposed as being well-suited replacements to KL \cite{bousquet2017optimal,genevay2017gan}. Briefly, the advantages are (1) avoidance of mode collapse (no need to choose between spreading over modes or collapsing to a single mode as in KL), and (2) the ability to trade off errors and incentivize approximations that respect the underlying metric. 

Recently, \citet{wgan} introduced the Wasserstein GAN, reposing the two-player game as the estimation of the gradient of the $1$-Wasserstein distance between the data and generator distributions. They reframe this in terms of the dual form of the $1$-Wasserstein, with the critic estimating a function $f$ which maximally separates the two distributions. 
While this is an exciting line of work, it still faces limitations when the critic solution is approximate, i.e.~when $f^*$ is not found before each update. In this case, due to insufficient training of the critic \cite{bellemare17cramer} or limitations of the function approximator, the gradient direction produced can be arbitrarily bad \cite{bousquet2017optimal}. 

Thus, we are left with the question of how to minimize a distribution loss respecting an underlying metric. Recent work in distributional reinforcement learning has proposed the use of \textit{quantile regression} as a method for minimizing the $1$-Wasserstein in the univariate case when approximating using a mixture of Dirac functions \cite{dabney2017qr}.

\subsection{Quantile Regression}
In this section, we review quantile regression as a method for estimating the quantile function of a distribution at specific points, i.e.~its inverse cumulative distribution function. This leads to recent work on approximating a distribution by a neural network approximation of its quantile function, acting as a reparameterization of a random sample from the uniform distribution.
 
The quantile regression loss \cite{koenker2001quantile} for a quantile at $\tau \in [0, 1]$ and error $u$ (positive for underestimation and negative for overestimation)
is given by $\rho_\tau(u) = (\tau -  \1\{u\leq 0\}) u $.
It is an asymmetric loss function penalizing underestimation by weight $\tau$ and overestimation by weight $1-\tau$.
For a given scalar distribution $Z$ with c.d.f.~$F_Z$ and a quantile $\tau$, the inverse c.d.f.~$q = F_Z^{-1}(\tau)$ minimizes the expected quantile regression loss $\mathbb{E}_{z\sim Z} \left[ \rho_\tau(z - q)\right]$. 

Using this loss allows one to train a neural network to approximate a scalar distribution represented by its inverse c.d.f. For this, the network can output a fixed grid of quantiles \cite{dabney2017qr}, with the respective quantile regression losses being applied to each output independently.
A more effective approach is to provide the desired quantile $\tau$ as an additional input to the network, and train it to output the corresponding value of $F_Z^{-1}(\tau)$. The \textit{implicit quantile network} (IQN) model \cite{iqn2018} reparameterizes a sample $\tau \sim \U([0,1])$ through a deterministic function to produce samples from the underlying data distribution. These two methods can be seen to belong to the top-right and bottom-right categories in Figure~\ref{fig:estimators}. An IQN $Q_\theta$ can be trained by stochastic gradient descent on the quantile regression loss, with $u = z - Q_\theta(\tau)$ and training samples $(z, \tau)$ drawn from $z \sim Z$ and $\tau \sim \U([0,1])$.

One drawback to the quantile regression loss is that gradients do not scale with the magnitude of the error, but instead with the sign of the error and the quantile weight $\tau$. This increases gradient variance and can negatively impact the final model's sample quality. Increasing the batch size, and thus averaging over more values of $\tau$, would have the effect of lowering this variance. Alternatively, we can smooth the gradients as the model converges by allowing errors, under some threshold $\kappa$, to be scaled with their magnitude, reverting to an \textit{expectile} loss. This results in the Huber quantile loss \cite{huber1964robust,dabney2017qr}:
\begin{align}\label{eqn:huberquantile}
    \rho^\kappa_\tau(u) = \begin{cases}
        \frac{|\tau - \1\{u\leq 0\}|}{2\kappa} u^2, \ &\text{if } |u| \le \kappa,\\
        |\tau - \1\{u\leq 0\}|(|u| - \frac{1}{2}\kappa), \ &\text{otherwise}.
    \end{cases}
\end{align}

\section{Autoregressive Implicit Quantiles}
\label{sec:model}

Let $X = (X_1, \ldots, X_n) \in \cX_1 \times \dots \times \cX_n = \cX$ be an $n$-dimensional random variable. We begin by analyzing the effect of two naive applications of IQN to modeling the distribution of $X$.

First, suppose we use the same quantile target, $\tau \in [0,1]$, for every output dimension. The only modification to IQN would be to output $n$ dimensions instead of $1$, the loss being applied to each output dimension independently. This is equivalent to assuming that the dimensions of $X$ are \textit{comonotonic}. Two random variables are comonotonic if and only if they can be expressed as non-decreasing (deterministic) functions of a single random variable \cite{dhaene2006risk}. Thus a joint quantile function for a comonotonic $X$ can be written as $F_X^{-1}(\tau) = (F^{-1}_{X_1}(\tau), F^{-1}_{X_2}(\tau), \ldots, F^{-1}_{X_{n}}(\tau))$. While there are many interesting uses for comonotonic random variables, we believe this assumption is too strong to be useful more broadly.

Second, one could use a separate value $\tau_i \in [0,1]$ for each $X_i$, with the IQN being unchanged from the first case. This corresponds to making an independence assumption on the dimensions of $X$. Again we would expect this to be an unreasonably restrictive modeling assumption for many domains, such as the case of natural images.

Now, we turn to our proposed approach of extending IQN to multivariate distributions. We fix an ordering of the $n$ dimensions. 
If the density function $p_X$ is expressed as a product of conditional likelihoods, as in Equation~\ref{eqn:conditional_factor}, then the joint c.d.f.~can be written as
\begin{align*}
    F_X(x) &= \operatorname{P}( X_1 \le x_1, \ldots, X_n \le x_n),\\
    &= \prod_{i=1}^n F_{X_{i} | X_{i-1}, \ldots, X_1}(x_i).
\end{align*}
Furthermore, for $\tau_{joint} = \prod_{i=1}^{n} \tau_i$, we can write the \emph{joint-quantile function} of $X$ as
\begin{equation*}
    F^{-1}_X(\tau_{joint}) = (F^{-1}_{X_1}(\tau_1), \ldots, F^{-1}_{X_n | X_{n-1}, \ldots}(\tau_n)).
\end{equation*}
This approach has been used previously by \citet{koenker2006quantile}, who introduced a quantile autoregression model for quantile regression on time-series.

We propose to extend IQN to an autoregressive model of the above conditional form of a joint-quantile function. Denoting 
$\cX_{1:i} = \cX_1 \times \dots \times \cX_i$, let $\tilde \cX := \bigcup_{i=0}^n \cX_{1:i}$ be the space of `partial' data points. We can define the autoregressive IQN as a deterministic function $Q_\theta\colon \tilde \cX \times [0, 1]^n \to \tilde \cX$, mapping partial samples $\tilde x \in \tilde \cX$ and quantile targets $\tau_i \in [0, 1]$ to estimates of $F^{-1}_X$. We can then train $Q_\theta$ using a quantile regression loss (Equation~\ref{eqn:huberquantile}). For generation, one can iterate $x_{1:i} = Q_\theta(x_{1:i-1}, \tau_{i})$, on a sequence of growing partial samples\footnote{Throughout we understand $x_0 = x_{1:0} \in \cX_{1:0}$ to denote the `empty tuple', and the function $Q_\theta$ to map this to a single unconditional sample $x_1 = x_{1:1} = Q_\theta(x_0, \tau_1)$.} $x_{1:i-1}$ and independently sampled $\tau_{i} \sim \U([0,1])$, for $i = 1, \ldots, n$, to finally obtain a sample $x = x_{1:n}$.


\subsection{Quantile Regression and the Wasserstein}

As previously mentioned, for the restricted model class of a uniform mixture of Diracs, quantile regression can be shown to minimize the $1$-Wasserstein metric \cite{dabney2017qr}. We extend this analysis for the case of arbitrary approximate quantile functions, and find that quantile regression minimizes a closely related divergence which we call {\em quantile divergence}, defined, for any distributions $P$ and $Q$, as
\begin{equation*}
    q(P, Q) := \int_0^1 \left[ \int_{F_P^{-1}(\tau)}^{F_{Q}^{-1}(\tau)} (F_P(x) - \tau) dx \right] d\tau.
\end{equation*}


Indeed, the expected quantile loss of any parameterized quantile function $\bar Q_\theta$ equals, up to a constant, the quantile divergence between $P$ and the distribution $Q_\theta$ implicitly defined by $\bar Q_\theta$:
$$ \expect_{\tau\sim\U([0,1])} \big[ \expect_{z\sim P} [\rho_\tau (z- \bar Q_\theta(\tau))]\big] = q(P, Q_\theta) + h(P),$$
where $h(P)$ does not depend on $Q_\theta$. Thus quantile regression minimizes the quantile divergence $q(P, Q_\theta)$ and the sample gradient $\nabla_\theta\rho_\tau(z- \bar Q_\theta(\tau))$ (for $\tau\sim\U([0,1])$ and $z\sim P$) is an unbiased estimate of $\nabla_\theta q(P, Q_\theta)$. See Appendix for proofs. 

\subsection{Quantile Density Function}
Although IQN does not directly model the log-likelihood of the data distribution, observe that we can still query the implied density at a point \cite{jones1992estimating}:
\begin{equation*}
    \frac{\partial}{\partial \tau}F_X^{-1}(\tau) = \frac{1}{p_X(F_X^{-1}(\tau))}.
\end{equation*}
Indeed, this quantity, known as the \textit{sparsity function} \cite{tukey1965part} or the \textit{quantile-density function} \cite{parzen1979nonparametric} plays a central role in the analysis of quantile regression models \cite{koenker1994confidence}. A common approach involves choosing a bandwidth parameter $h$ and estimating this quantity through finite-differences around the value of interest as $(F_X^{-1}(\tau + h) - F_X^{-1}(\tau - h))/2h $ \cite{siddiqui1960distribution}. However, as we have the full quantile function, the quantile-density function can be computed exactly using a single step of back-propagation to compute $\frac{\partial F^{-1}(\tau)}{\partial\tau}$. As this only allows querying the density given the value of $\tau$, application to general likelihoods would require finding the value of $\tau$ that produces the closest approximation to the query point $x$. Though arguably too inefficient for training, this could potentially be used to interrogate the model.

\begin{figure}[t]
\begin{center}
\includegraphics[width=.8\textwidth]{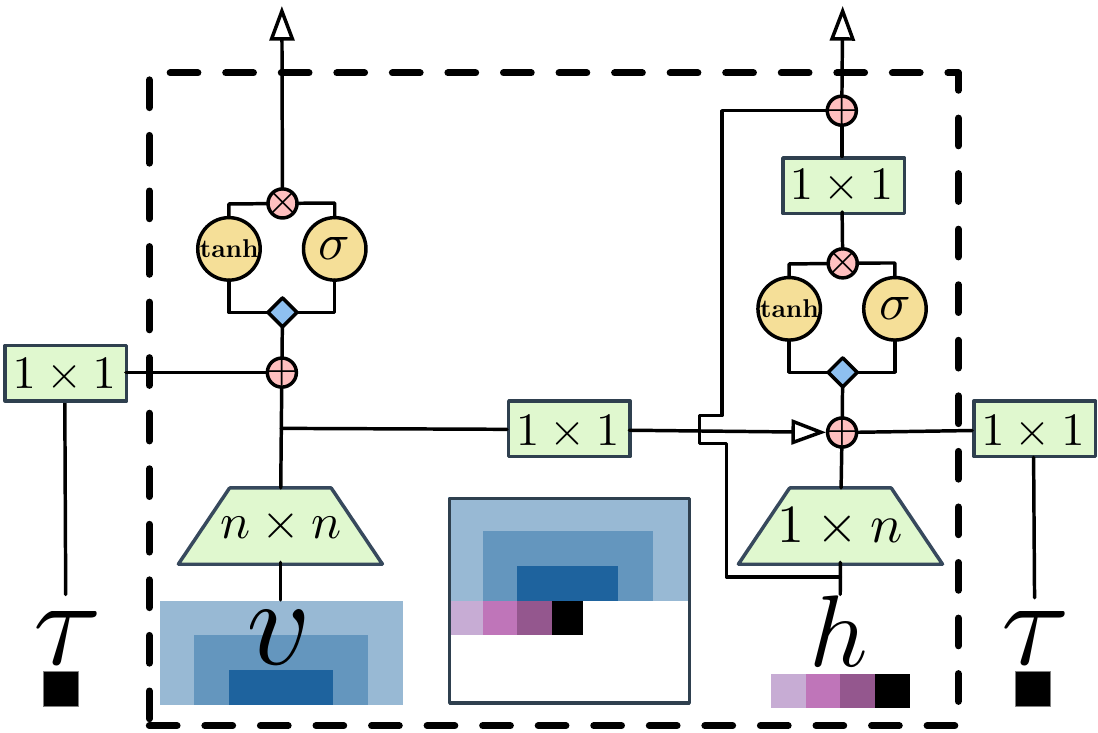}
\end{center}
\caption{Illustration of Gated PixelCNN layer block for PixelIQN. Dashed line shows boundary of standard Gated PixelCNN, with $v$ the vertical and $h$ the horizontal stack. Conditioning on $\tau$ is identical to the location-dependent conditioning in Gated PixelCNN.}\label{fig:pixeliqn}
\end{figure}

\section{PixelIQN}
\label{sec:pixeliqn}

\begin{figure*}[t]
\begin{center}
\includegraphics[width=0.33\textwidth]{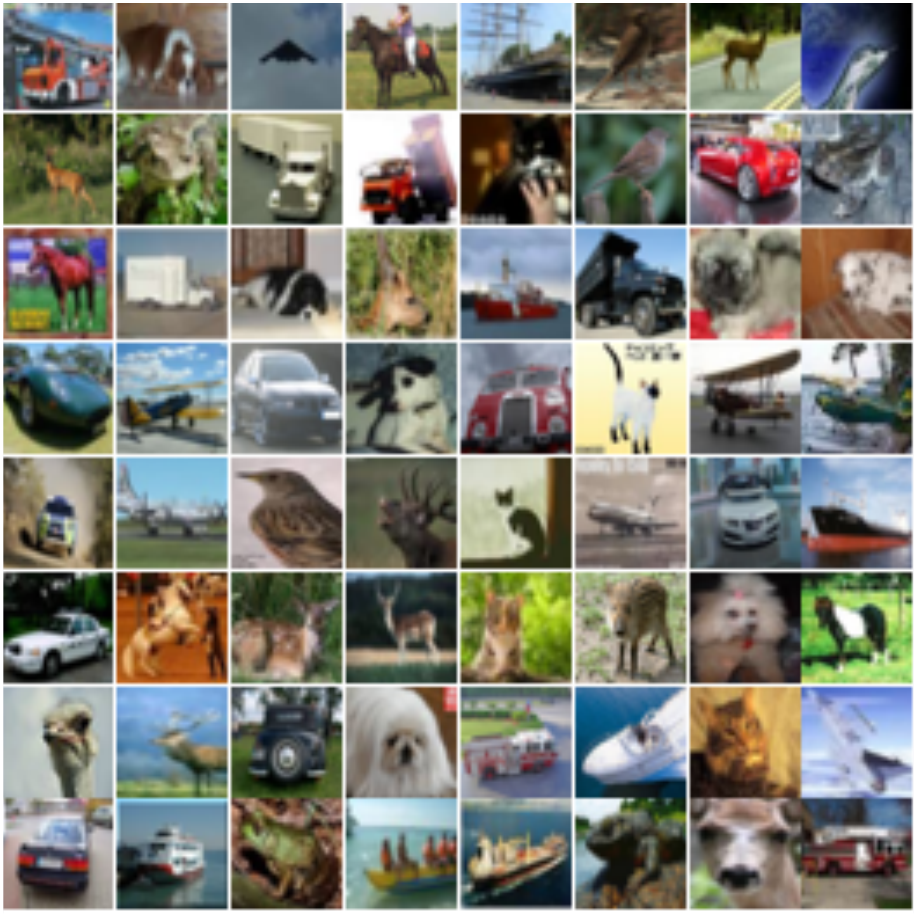}
\includegraphics[width=0.33\textwidth]{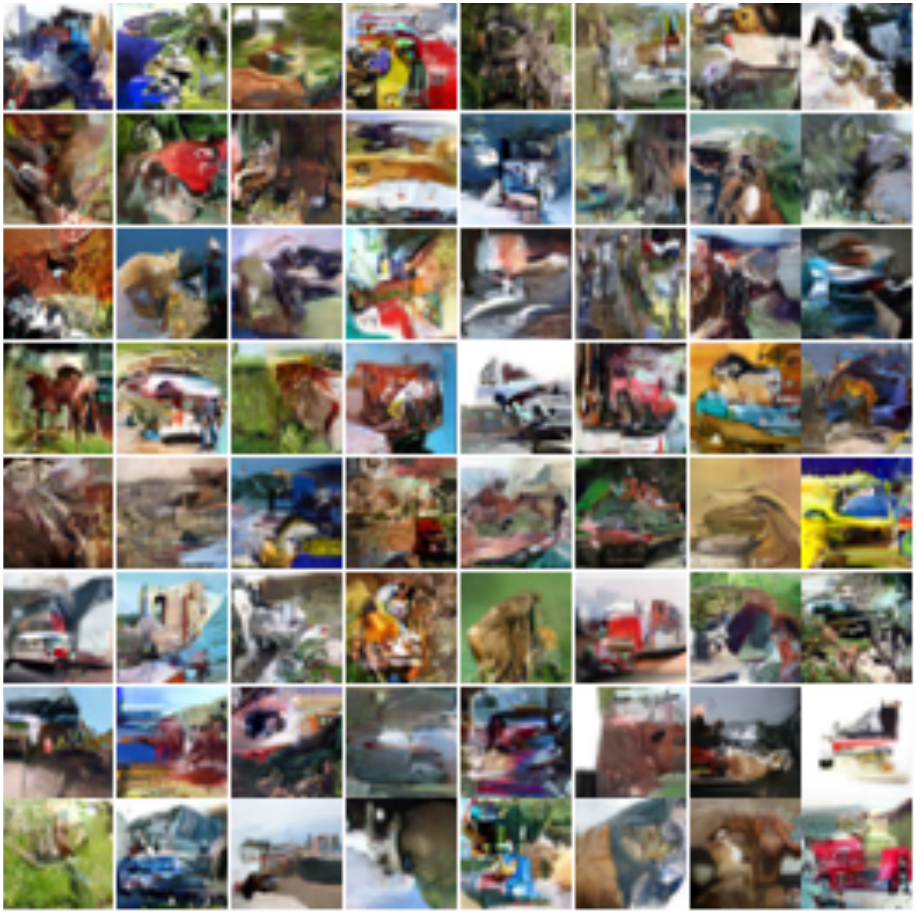}
\includegraphics[width=0.33\textwidth]{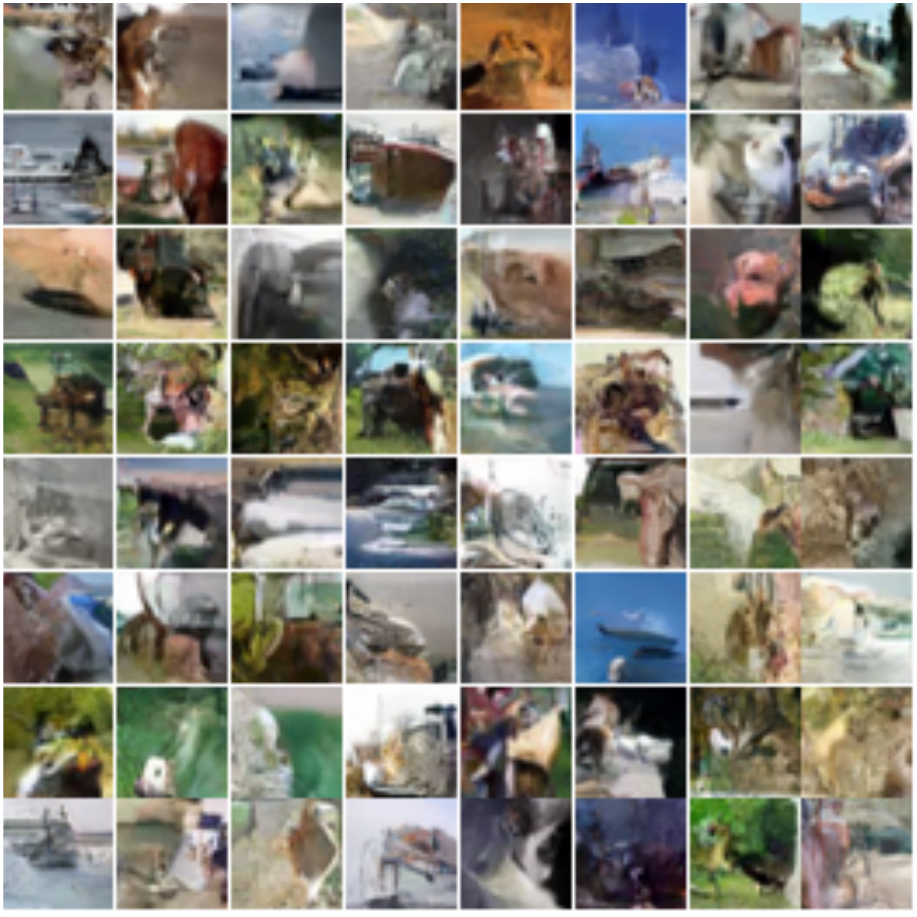}
\end{center}
\caption{CIFAR-10: Real example images (left), samples generated by PixelCNN (center), and samples generated by PixelIQN (right).}\label{fig:cifar_samples}
\end{figure*}

To test our proposed method, which is architecturally compatible with many generative model approaches, we wanted to compare and contrast IQN, that is quantile regression and quantile reparameterization, with a method trained with an explicit parameterization to minimize KL divergence. A natural choice for this was PixelCNN, specifically we build upon the Gated PixelCNN of \citet{van2016conditional}.

The Gated PixelCNN takes as input an image $x \sim X$, sampled from the training distribution at training time, and potentially all zeros or partially generated at generation time, as well as a location-dependent context $s$. The model consists of a number of residual layer blocks, whose structure is chosen to allow each output pixel to be a function of all preceding input pixels (in a raster-scan order).
At its core, each layer block computes two gated activations of the form
\begin{equation*}
    y = \tanh(W_{k,f} * x + V_{k,f} * s) \odot \sigma(W_{k,g} * x + V_{k,g} * s),
\end{equation*}
with $k$ the layer index, $*$ denoting convolution, and $V_{k,f}$ and $V_{k,g}$ being $1\times1$ convolution kernels. See Figure~\ref{fig:pixeliqn} for a full schematic depiction of a Gated PixelCNN layer block.
After a number of such layer blocks, the PixelCNN produces a final output layer with shape $(n, n, 3, 256)$, with a softmax across the final dimension, corresponding to the approximate conditional likelihood for the value of each pixel-channel. That is, the conditional likelihood is the product of these individual autoregressive models,
\begin{equation*}
    p(x | s) = \prod_{i=1}^{3n^2} p(x_i | x_1, \ldots, x_{i-1}, s_i).
\end{equation*}

Typically the location-dependent conditioning term was used to condition on class labels, but here, we will use it to condition on the sample point\footnote{Conditioning on labels remains possible (see Section \ref{sec:imagenet}).} $\tau \in [0, 1]^{3n^2}$.
Thus, in addition to the input image $x$ we input, in place of $s$, the sample points $\tau = (\tau_1, \ldots, \tau_{3n^2})$ to be reparameterized, with each $\tau_i \sim \U([0,1])$. Finally, our network outputs only the full sample image of shape $(n, n, 3)$, without the need for an additional softmax layer. Note that the number of $\tau$ values generated exactly corresponds to the number of random draws from softmax distributions in the original PixelCNN. We are simply changing the role of the randomness, from a draw at the output to a part of the input.

Architecturally, our proposed model, PixelIQN, is exactly the network given by \citet{van2016conditional}, with the one exception that we output only a single value per pixel-channel and do not require the softmax activations.

In PixelCNN training is done by passing the training image through the network, and training each output softmax distribution using the KL divergence between the training image and the approximate distribution,
\begin{equation*}
    \sum_i D_{KL}(\delta_{x_i}, p(\cdot | x_1, \ldots, x_{i-1})).
\end{equation*}
For PixelIQN, the input is the training image $x$ and a sample point $\tau \sim\U([0,1]^{3n^2})$. The output values $Q_x(\tau) \in \bR^{3n^2}$ are interpreted as the approximate quantile function at $\tau$, $Q_x(\tau)_i = Q_X(\tau_i | x_{i-1}, \ldots)$, trained with a single step of quantile regression towards the observed sample $x$:
\begin{equation*}
    \sum_i \rho_{\tau_i}^{\kappa}(x_i - Q_X(\tau_i | x_{i-1}, \ldots)).
\end{equation*}

%

\subsection{CIFAR-10}
\label{sec:cifar}

\begin{figure*}[t]
\begin{floatrow}
\ffigbox{
\includegraphics[width=.5\textwidth]{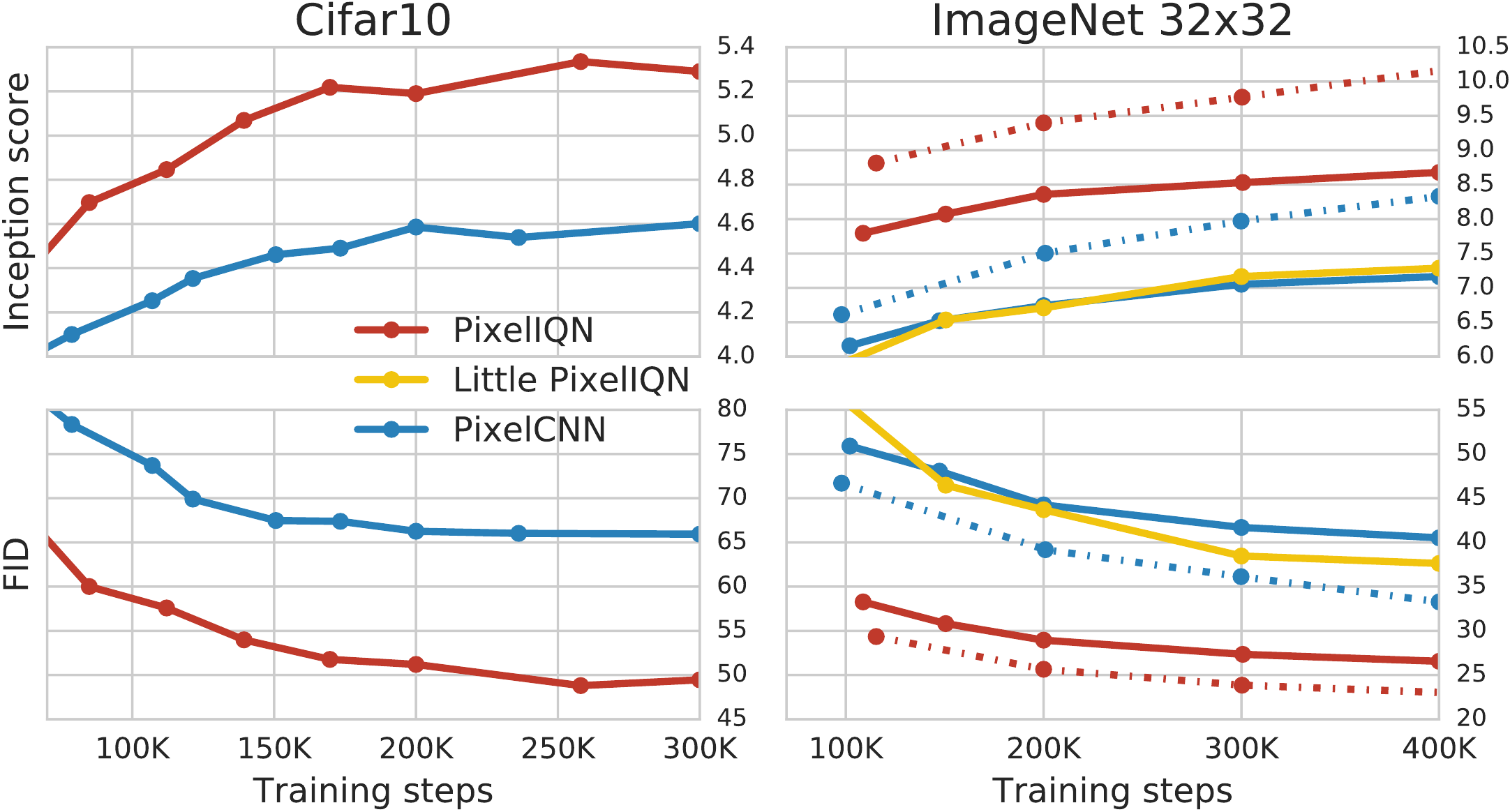}
}{
  \caption{Evaluations by Inception score (higher is better) and FID (lower is better) on CIFAR-10 and ImageNet 32x32. Dotted lines correspond to models trained with class-label conditioning.}\label{fig:fid_scores}
}
\capbtabbox{
\begin{tabular}{l | l | l | l | l}
         & \multicolumn{2}{c}{\underline{CIFAR-10}} & \multicolumn{2}{c}{\underline{ImageNet (32x32)}} \\
Method   & \small{Inception}       & \textsc{fid}        & \small{Inception}        & \textsc{fid}          \\
\hline
\hline
WGAN     & 3.82            & -          & -                & -            \\
WGAN-GP  & 6.5             & 36.4       & -                & -            \\
DC-GAN   & 6.4             & 37.11      & 7.89             & -            \\
PixelCNN & 4.60            & 65.93      & 7.16             & 40.51        \\
PixelIQN & 5.29            & 49.46      & 8.68             & 26.56        \\
PixelIQN(l) & - & - & 7.29 & 37.62        \\
\hline
PixelCNN$^*$ & - & - & 8.33 & 33.27 \\
PixelIQN$^*$ & - & - & 10.18 & 22.99 \\
\end{tabular}
}{
  \caption{Inception score and FID for CIFAR-10 and ImageNet. WGAN and DC-GAN results taken from \cite{wgan, radford2015unsupervised}. PixelIQN(l) is the small 15-layer version of the model. Models marked $*$ refer to class-conditional training.}
  \label{tab:scores}
}
\end{floatrow}
\end{figure*}

We begin by demonstrating PixelIQN on CIFAR-10 \cite{krizhevsky2009learning}. For comparison, we train both a baseline Gated PixelCNN and a PixelIQN. Both models correspond to the $15$-layer network variant in \cite{van2016conditional}, see Appendix for detailed hyperparameters and training procedure. The two methods have substantially different loss functions, so we performed a hyperparameter search using a short training run, with the same number ($500$) of hyperparameter configurations evaluated for both models. For all results, we report full training runs using the best found hyperparameters in each case. The evaluation metric used for the hyperparameter search was the Fr\'echet Inception Distance (FID) \cite{heusel2017gans}, see Appendix for details.
In addition to FID, we report Inception score \cite{salimans2016improved} for both models.

Figure~\ref{fig:fid_scores} (left) shows Inception score and FID for both models evaluated at several points throughout training. The fully trained PixelCNN achieves an Inception score and FID of $4.6$ and $65.9$ respectively, while PixelIQN substantially outperforms it with an Inception score of $5.3$ and FID of $49.5$. This also compares favorably with e.g.~WGAN \cite{wgan}, which reaches an Inception score of $3.8$. For subjective evaluations, we give samples from both models in Figure~\ref{fig:cifar_samples}. Samples coming from PixelIQN are much more visually coherent. Of note, the PixelIQN model achieves a performance level comparable to that of the fully trained PixelCNN with only about one third the number of training updates (and about one third of the wall-clock time).

\subsection{ImageNet 32x32}
\label{sec:imagenet}

\begin{figure*}[t]
\begin{center}
\includegraphics[width=0.33\textwidth]{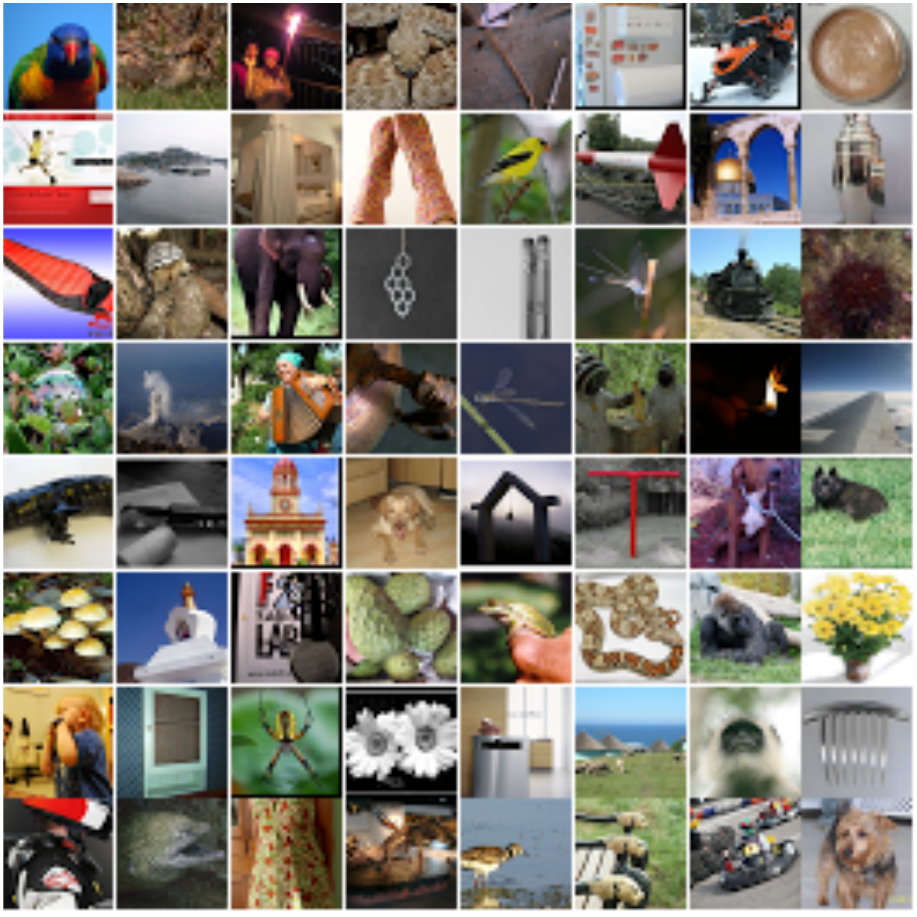}
\includegraphics[width=0.33\textwidth]{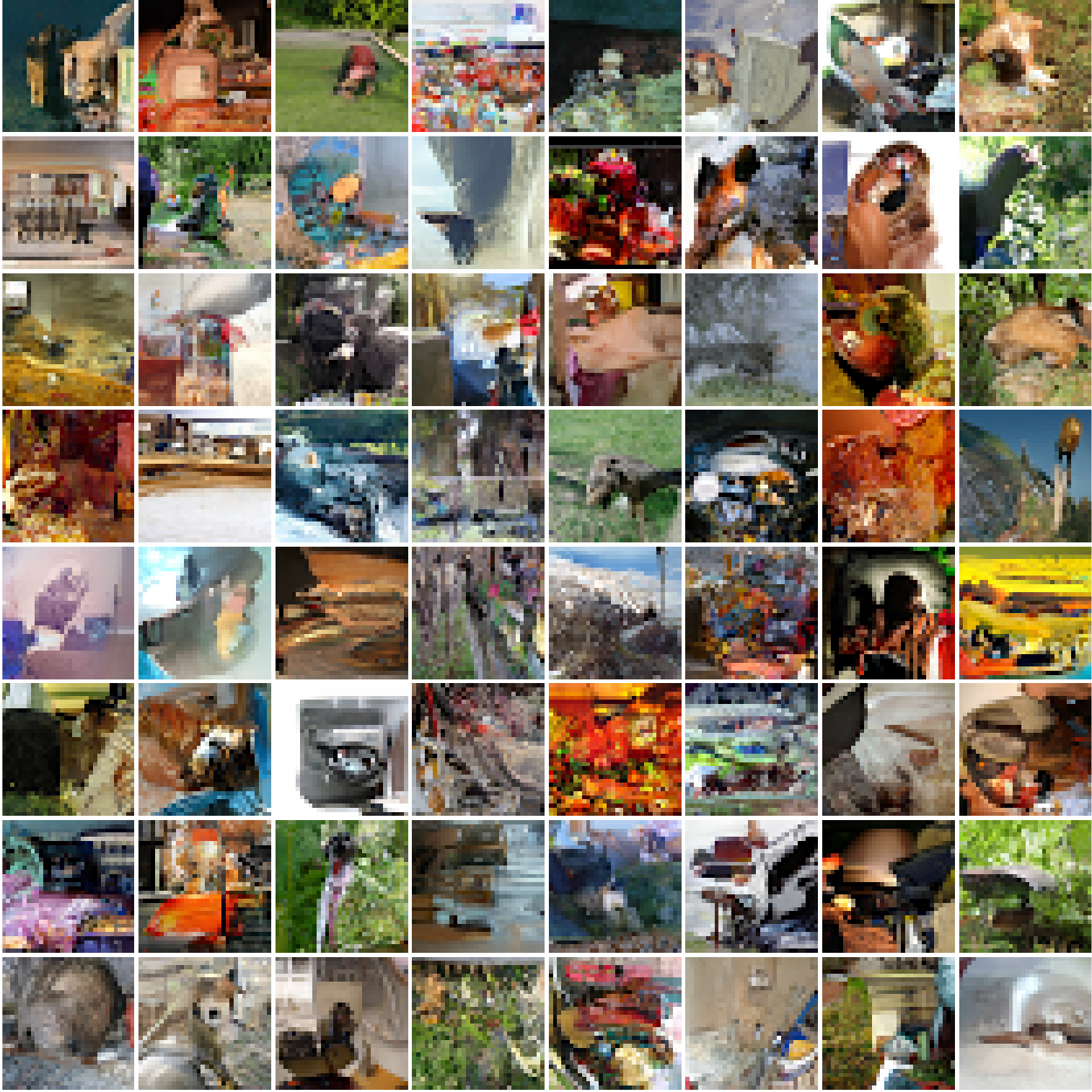}
\includegraphics[width=0.33\textwidth]{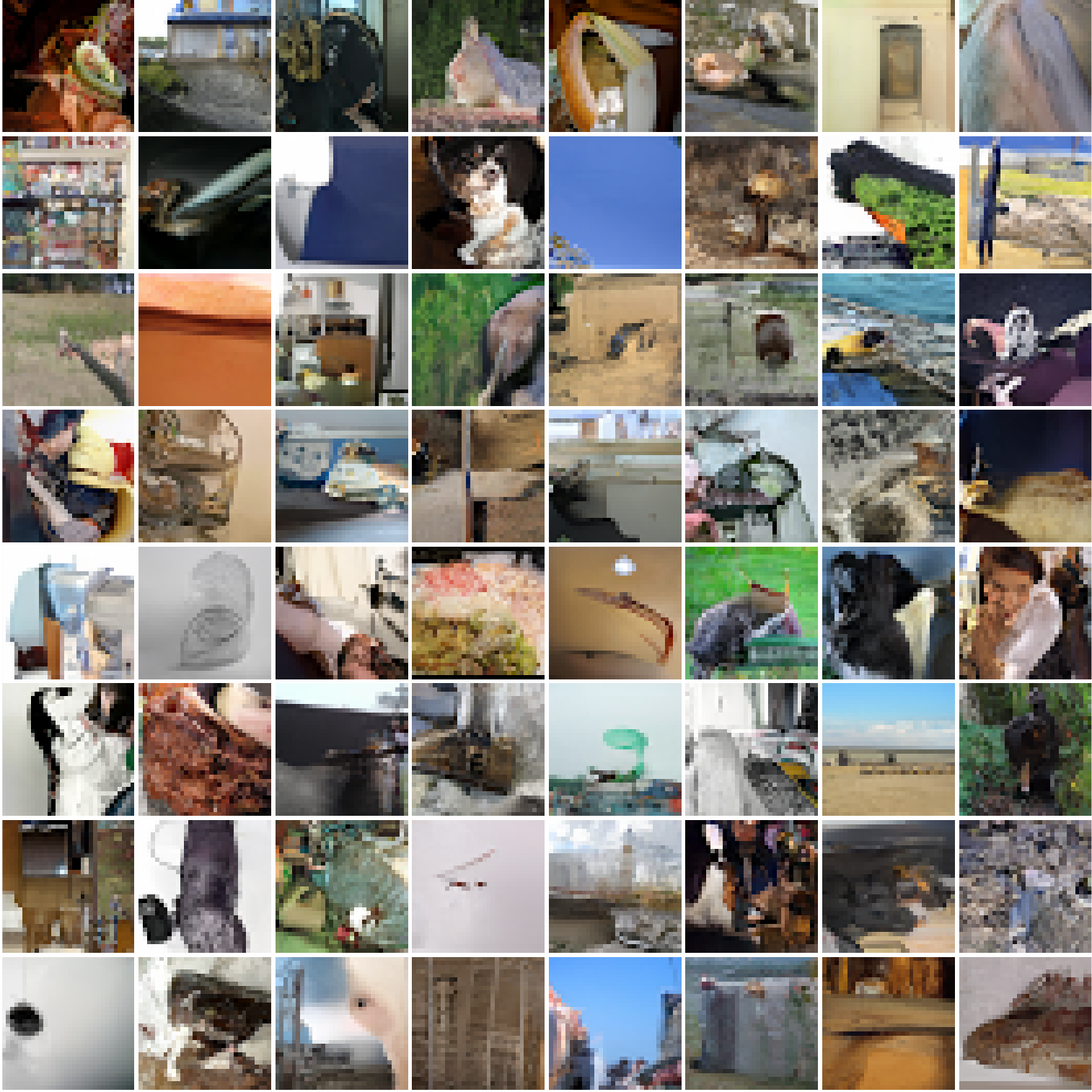}
\end{center}
\caption{ImageNet 32x32: Real example images (left), samples generated by PixelCNN (center), and samples generated by PixelIQN (right). Neither of the sampled image sets were cherry-picked. More samples by PixelIQN in the Appendix.}\label{fig:imagenet_samples}
\end{figure*}

Next, we turn to the small ImageNet dataset \cite{russakovsky2015imagenet}, first used for generative modeling in the PixelRNN work \cite{vandenoord16pixel}. Again, we evaluate using FID and Inception score. For this much harder dataset, we base our PixelCNN and PixelIQN models on the larger $20$-layer variant used in \cite{van2016conditional}. Due to substantially longer training time for this model, we did not perform additional hyperparameter tuning, and mostly used the same hyperparameter values as in the previous sections for both models; details can be found in the Appendix.

Figure~\ref{fig:fid_scores} shows Inception score and FID throughout training of PixelCNN and PixelIQN. Again, PixelIQN substantially outperforms the baseline in terms of final performance and sample complexity. For final scores and a comparison to state-of-the-art GAN models, see Table~\ref{tab:scores}. Figure~\ref{fig:imagenet_samples} shows random (non-cherry-picked) samples from both models. Compared to PixelCNN, PixelIQN samples appear to have superior quality with more global consistency and less `high-frequency noise'.

In Figure~\ref{fig:imagenet_inpaint}, we show the inpainting performance of PixelIQN, by fixing the top half of a validation set image as input and sampling repeatedly from the model to generate different completions. We note that the model consistently generates plausible completions with significant diversity between different completion samples for the same input image. Meanwhile, WGAN-GP has been seen to produce deterministic completions \cite{bellemare17cramer}.

Following \cite{van2016conditional}, we also trained a class-conditional PixelIQN variant, providing to the model the one-hot class label corresponding to a training image (in addition to a $\tau$ sample). Samples from a class-conditional model can be expected to have higher visual quality, as the class label provides $\log_2(1000) \approx 10$ bits of information, see Figure~\ref{fig:class_conditional}. As seen in Figure~\ref{fig:fid_scores} and Table~\ref{tab:scores}, class conditioning also further improves Inception score and FID. To generate each sample for the computation of these scores, we sample one of 1000 class labels randomly, then generate an image conditioned on this label via the trained model.

Finally, motivated by the very long training time for the large PixelCNN model (approximately 1 day per 100K training steps, on 16 NVIDIA Tesla P100 GPUs), we also trained smaller $15$-layer versions of the models (same as the ones used on CIFAR-10) on the small ImageNet dataset. For comparison, these take approximately 12 hours for 100K training steps on a single P100 GPU, or less than 3 hours on 8 P100 GPUs. As expected, little PixelCNN, while suitable for the CIFAR-10 dataset, fails to achieve competitive scores on the ImageNet dataset, achieving Inception score $5.1$ and FID $66.4$. Astonishingly, little PixelIQN on this dataset reaches Inception score $7.3$ and FID $38.5$, see Figure~\ref{fig:fid_scores} (right). It thereby not only outperforms the little PixelCNN, but also the larger $20$-layer version! This strongly supports the hypothesis that PixelCNN, and potentially many other models, are constrained not only by their model capacity, but crucially also by the sub-optimal trade-offs made by their log-likelihood training criterion, failing to align with perceptual or evaluation metrics.

\begin{figure}[t]
\begin{center}
\includegraphics[width=.8\textwidth]{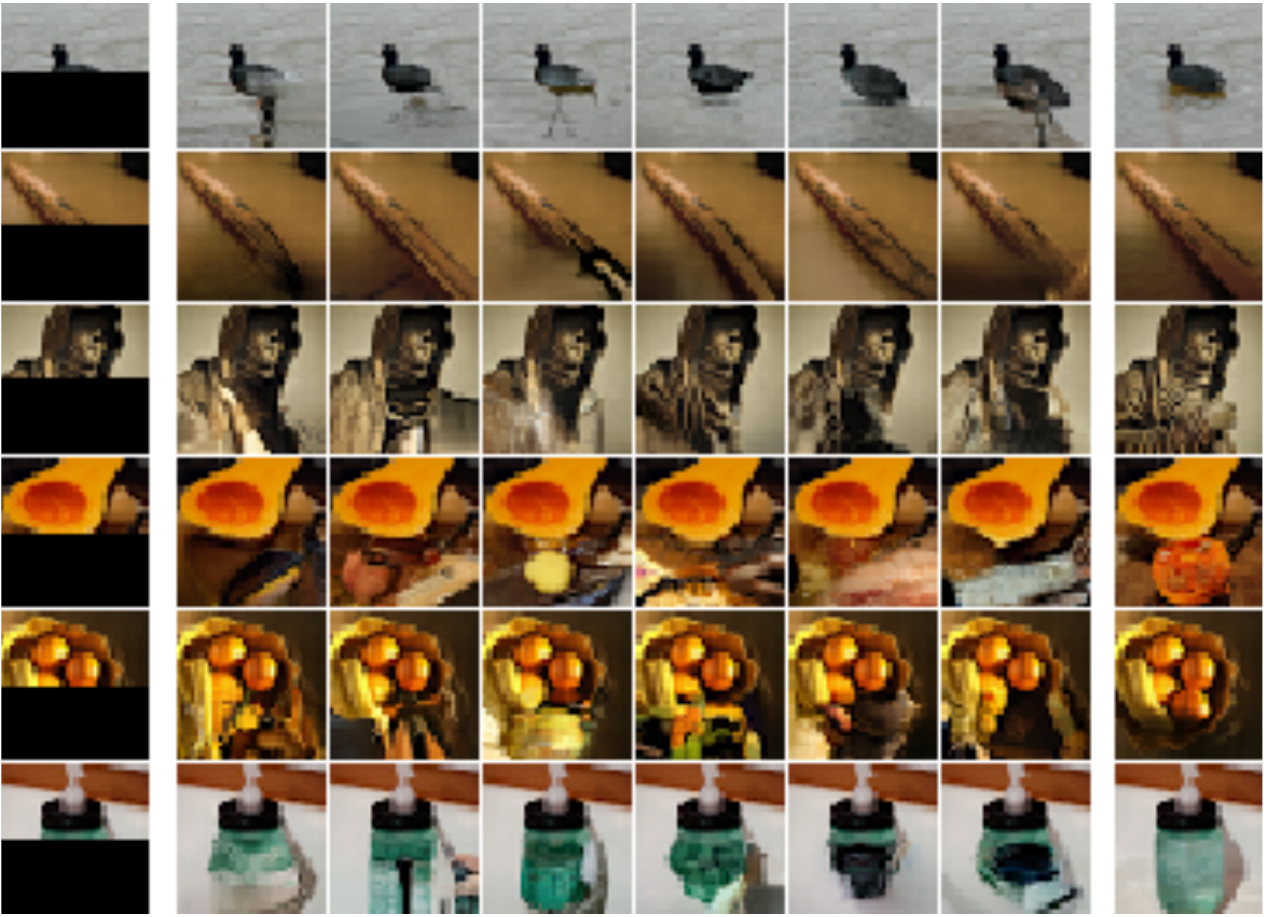}
\end{center}
\caption{Small ImageNet inpainting examples. Left image is the input provided to the network at the beginning of sampling, right is the original image, columns in between show different completions. More examples in the Appendix.}\label{fig:imagenet_inpaint}
\end{figure}

\begin{figure}[t]
\begin{center}
\includegraphics[width=.9\textwidth]{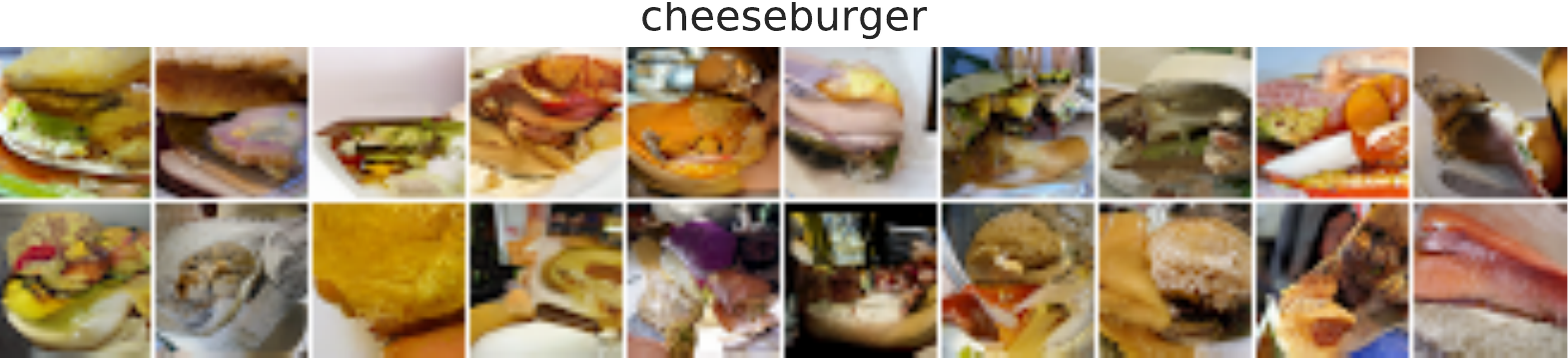}
\includegraphics[width=.9\textwidth]{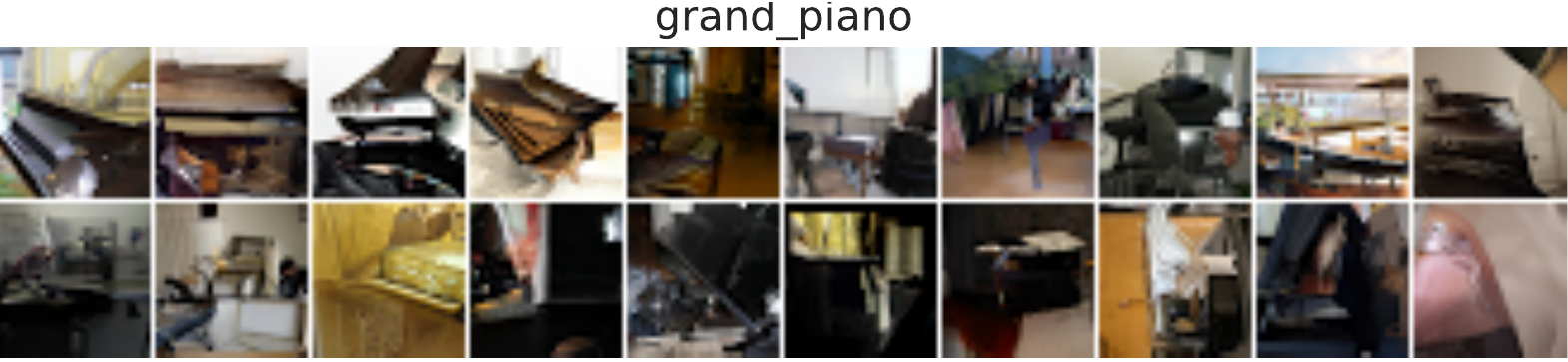}
\includegraphics[width=.9\textwidth]{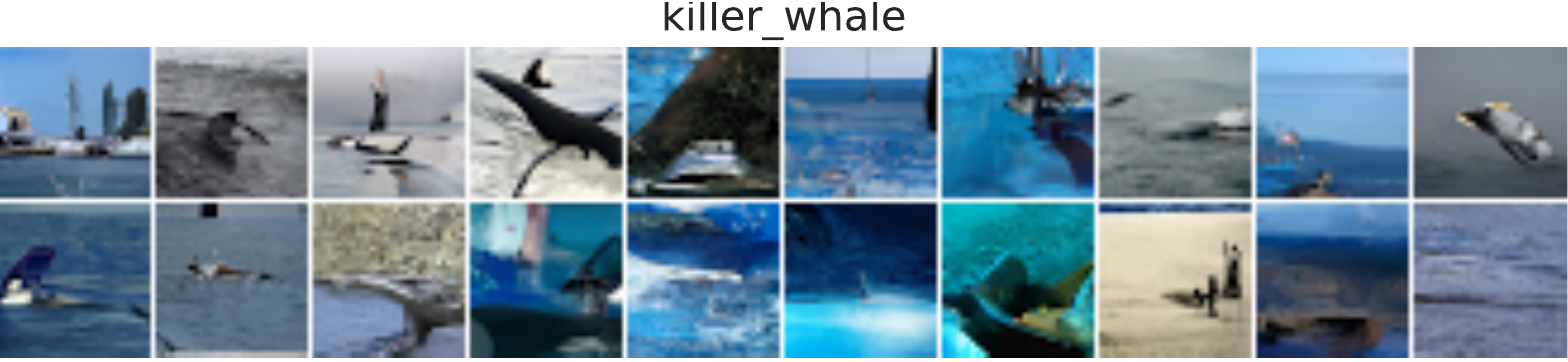}
\includegraphics[width=.9\textwidth]{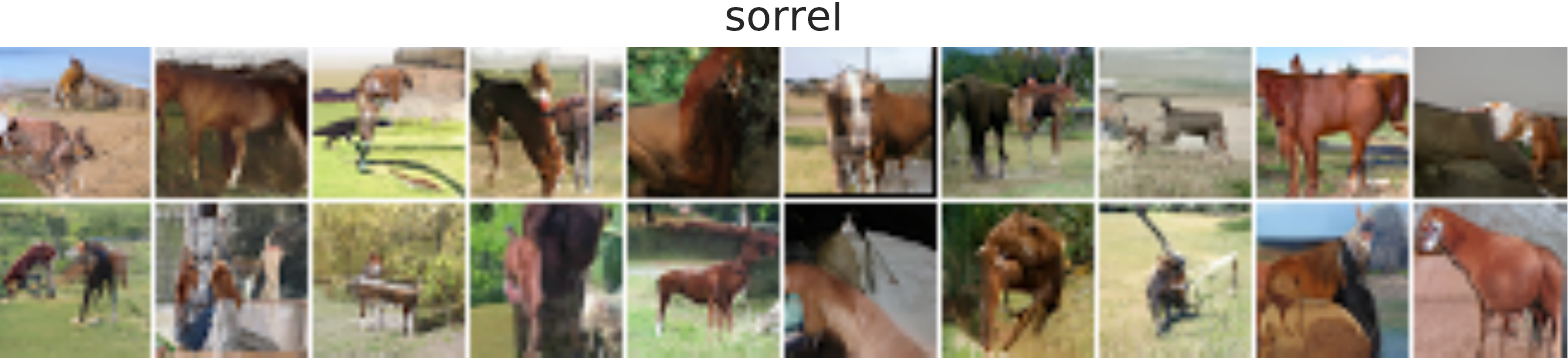}
\includegraphics[width=.9\textwidth]{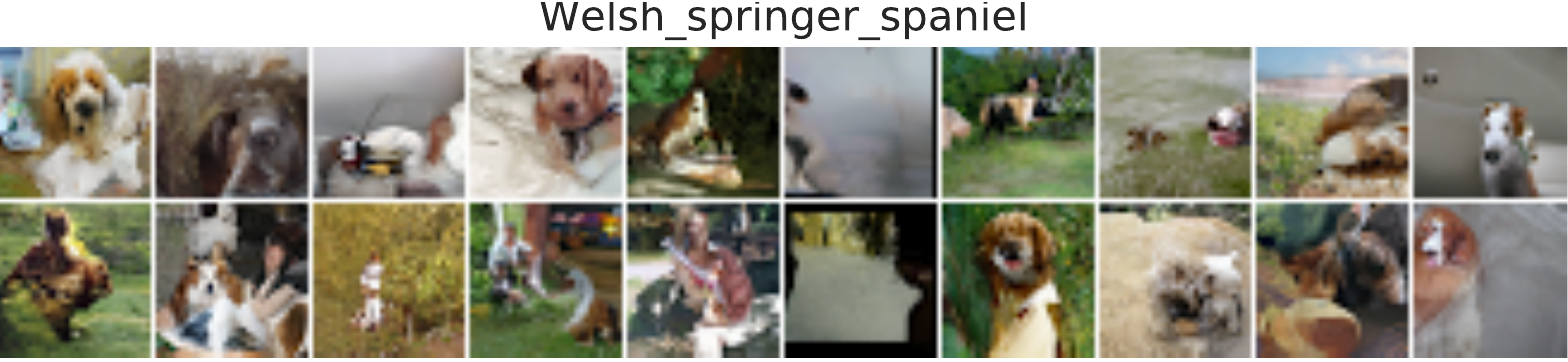}
\end{center}
\caption{Class-conditional samples from PixelIQN. More samples of each class and more classes in the Appendix.}\label{fig:class_conditional}
\end{figure}

\section{Discussion and Conclusions}
\label{sec:conclusion}

Most existing generative models for images belong to one of two classes. The first are likelihood-based models, trained with an elementwise KL reconstruction loss, which, while perceptually meaningless, provides robust optimization properties and high sample diversity.
The second are GANs, trained based on a discriminator loss, typically better aligned with a perceptual metric and enabling the generator to produce realistic, globally consistent samples. Their advantages come at the cost of a harder optimization problem, high parameter sensitivity, and most importantly, a tendency to collapse modes of the data distribution.

AIQNs are a new, fundamentally different, technique for generative modeling. By using a quantile regression loss instead of KL divergence, they combine some of the best properties of the two model classes. By their nature, they preserve modes of the learned distribution, while producing perceptually appealing high-quality samples. 
The inevitable approximation trade-offs a generative model makes when constrained by capacity or insufficient training can vary significantly depending on the loss used. We argue that the proposed quantile regression loss aligns more effectively with a given metric and therefore makes subjectively more advantageous trade-offs.

Devising methods for quantile regression over multidimensional outputs is an active area of research. New methods are continuing to be investigated \cite{carlier2016,hallin2016multiple}, and a promising direction for future work is to find ways to use these to replace autoregressive models. One approach to reducing the computational burden of such models is to apply AIQN to the latent dimensions of a VAE. Similar in spirit to \citet{rosca2017variational}, this would use the VAE to reduce the dimensionality of the problem and the AIQN to sample from the true latent distribution. In the Appendix we give preliminary results using such an technique, on CelebA $64 \times 64$ \cite{liu2015faceattributes}.

We have shown that IQN, computationally cheap and technically simple, can be readily applied to existing architectures, PixelCNN and VAE (Appendix), improving robustness and sampling quality of the underlying model. We demonstrated that PixelIQN produces more realistic, globally coherent samples, and improves Inception score and FID. 

We further point out that many recent advances in generative models could be easily combined with our proposed method. Recent algorithmic improvements to GANs such as mini-batch discrimination and progressive growing \cite{salimans2016improved,karras2017progressive}, while not strictly necessary in our work, could be applied to further improve performance.
PixelCNN++ \cite{salimans2017pcnn} is an architectural improvement of PixelCNN, with several beneficial modifications supported by experimental evidence. Although we have built upon the original Gated PixelCNN in this work, we believe all of these modifications to be compatible with our work, except for the use of a mixture of logistics in place of PixelCNN's softmax. As we have entirely replaced this model component, this change does not map onto our model. Of note, the motivation behind this change closely mirrors our own, in looking for a loss that respects the underlying metric between examples. The recent PixelSNAIL model \cite{chen2017pixelsnail} achieves state-of-the-art modeling performance by enhancing PixelCNN with ELU nonlinearities, modified block structure, and an attention mechanism. Again, all of these are fully compatible with our work and should improve results further.

Finally, the implicit quantile formulation lifts a number of architectural restrictions of previous generative models. Most importantly, the reparameterization as an inverse c.d.f.~allows to learn distributions over continuous ranges without pre-specified boundaries or quantization. This enables modeling continuous-valued variables, for example for generation of sound \cite{van2016wavenet}, opening multiple interesting avenues for further investigation.

\section*{Acknowledgements}
We would like to acknowledge the important role many of our colleagues at DeepMind played for this work.
We especially thank A\"aron van den Oord and Sander Dieleman for invaluable advice on the PixelCNN model; Ivo Danihelka and Danilo J. Rezende for careful reading and insightful comments on an earlier version of the paper; Igor Babuschkin, Alexandre Galashov, Dominik Grewe, Jacob Menick, and Mihaela Rosca for technical help.

\bibliography{distrl}
\bibliographystyle{icml2018}

\clearpage

\section*{Appendix}

\section*{Quantile regression minimizes the quantile divergence}

\begin{proposition}
For any distributions $P$ and $Q$, define the quantile divergence
\begin{equation*}
    q(P, Q) := \int_0^1 \left[ \int_{F_P^{-1}(\tau)}^{F_{Q}^{-1}(\tau)} (F_P(x) - \tau) \,dx \right] d\tau.
\end{equation*}
Then the expected quantile loss of a quantile function $\bar Q$ implicitly defining the distribution $Q$ satisfies 
$$ \expect_{\tau\sim\U([0,1])} \expect_{X\sim P} \big[\rho_\tau (X- \bar Q(\tau))\big] = q(P, Q) + h(P),$$
where $h(P)$ does not depend on $Q$. 
\end{proposition}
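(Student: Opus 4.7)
The plan is to fix $\tau$ and analyze the inner expectation $L(\tau, q) := \expect_{X \sim P}[\rho_\tau(X - q)]$ as a function of $q$. The key observation is that $\partial L/\partial q$ has exactly the form of the integrand defining the quantile divergence, which reduces the claim to the fundamental theorem of calculus followed by a Fubini swap.

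First, I would unpack the quantile loss as
\begin{equation*}
L(\tau, q) = \tau \int_q^\infty (x - q)\, dP(x) + (1-\tau) \int_{-\infty}^q (q - x)\, dP(x),
\end{equation*}
and differentiate with respect to $q$. Since $\rho_\tau(x-\cdot)$ is absolutely continuous with Lipschitz constant at most $\max(\tau, 1-\tau) \le 1$, dominated convergence justifies swapping differentiation and expectation, giving
\begin{equation*}
\frac{\partial L}{\partial q}(\tau, q) = -\tau\bigl(1 - F_P(q)\bigr) + (1 - \tau) F_P(q) = F_P(q) - \tau.
\end{equation*}
In particular, $L(\tau, \cdot)$ is minimized at $q = F_P^{-1}(\tau)$, recovering the standard optimality characterization of quantile regression.

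Second, I apply the fundamental theorem of calculus on the interval with endpoints $F_P^{-1}(\tau)$ and $\bar Q(\tau) = F_Q^{-1}(\tau)$:
\begin{equation*}
L(\tau, \bar Q(\tau)) = L(\tau, F_P^{-1}(\tau)) + \int_{F_P^{-1}(\tau)}^{F_Q^{-1}(\tau)}\bigl(F_P(y) - \tau\bigr)\, dy.
\end{equation*}
Integrating over $\tau \sim \U([0,1])$ and using Fubini separates the expected loss into a term $h(P) := \int_0^1 L(\tau, F_P^{-1}(\tau))\, d\tau$, which depends only on $P$, plus a second term that coincides with $q(P, Q)$ by definition, yielding the claimed identity.

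The main obstacle I anticipate is bookkeeping rather than substantive: I would need to pin down the precise meaning of ``$Q$ implicitly defined by $\bar Q$'', taking $\bar Q$ to be a non-decreasing c\`adl\`ag map on $[0,1]$ and $Q$ its pushforward of $\U([0,1])$, so that $F_Q^{-1} = \bar Q$ almost everywhere, and to check that jumps of $F_P$ or $F_Q^{-1}$ contribute only null sets in the outer integral so that the FTC step is valid in the Lebesgue sense. The derivative identity $\partial_q L = F_P - \tau$ is the substantive content; everything else is structural.
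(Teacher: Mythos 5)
Your proof is correct and reaches the same pointwise identity as the paper, namely that for each $\tau$ the excess quantile loss is
\begin{equation*}
g_\tau(q) - g_\tau\bigl(F_P^{-1}(\tau)\bigr) = \int_{F_P^{-1}(\tau)}^{q} \bigl(F_P(x)-\tau\bigr)\,dx,
\end{equation*}
followed by averaging over $\tau\sim\U([0,1])$. The route is slightly different, though: the paper first computes the primitive explicitly, using integration by parts to obtain the closed form $g_\tau(q)=\int_{-\infty}^q F_P(x)\,dx + \tau\bigl(\expect_{X\sim P}[X]-q\bigr)$, reads off the minimizer $F_P^{-1}(\tau)$, and subtracts; you instead compute the derivative $\partial_q g_\tau(q)=F_P(q)-\tau$ directly and recover the decomposition via the fundamental theorem of calculus. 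The two are of course inverse operations, and the algebraic content is the same. Your version is marginally cleaner and more general: the paper's computation is written in terms of a density $f_P$, implicitly assuming $P$ is absolutely continuous, whereas the derivative $\partial_q g_\tau = F_P(q)-\tau$ holds for general $P$ (the Lipschitz bound $\max(\tau,1-\tau)\le 1$ justifying the interchange), and the identity then follows by FTC for Lipschitz, hence absolutely continuous, $q\mapsto g_\tau(q)$. The points you flag as bookkeeping (c\`adl\`ag convention for $\bar Q$, null sets from jumps of $F_P$) are indeed harmless and not addressed in the paper either.
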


\begin{proof}

Let $P$ be a distribution with p.d.f.~$f_P$ and c.d.f.~$F_P$. Define
\begin{align*}
    \rho_\tau(u) &= u (\tau - \1\{u\leq 0\}),\\
    g_\tau(q) &= \expect_{X\sim P}[\rho_\tau(X-q)].
\end{align*}
We have, for any $q$ and $\tau$,
\beqan
g_\tau(q) &=& \int_{-\infty}^q (x-q)(\tau-1)f_P(x) \,dx \\
& &\quad + \int_q^{\infty} (x-q) \tau f_P(x) \,dx \\
&=& \int_{-\infty}^q (q-x) f_P(x) dx + \int_{-\infty}^{\infty} (x-q) \tau f_P(x) \,dx \\
&=& q F_P(q) + \int_{-\infty}^q F_P(x) \,dx - [ x F_P(x)]_{-\infty}^q \\
& & \quad +\ \tau \left(\expect_{X\sim P}[X] - q\right) \\
&=& \int_{-\infty}^q F_P(x) \,dx + \tau \left(\expect_{X\sim P}[X] - q\right),
\eeqan 
where the third equality follows from an integration by parts of $\int_{-\infty}^q x f_P(x)\,dx$. Thus the function $q\mapsto g_\tau(q)$ is minimized for $q=F^{-1}_P(\tau)$ and its minimum is
\beqan
g_\tau(F^{-1}_P(\tau)) = \int_{-\infty}^{F^{-1}_P(\tau)} F_P(x) \,dx + \tau \left(\expect_{X\sim P}[X] - F^{-1}_P(\tau)\right).
\eeqan 
We deduce that
\beqan
&& g_\tau(q) - g_\tau(F^{-1}_P(\tau)) \\
&=& \int_{F^{-1}_P(\tau)}^q F_P(x) \,dx + \tau (F^{-1}_P(\tau) - q) \\
&=& \int_{F^{-1}_P(\tau)}^q (F_P(x)-\tau) \,dx.
\eeqan
Thus for a quantile function $\bar Q$, we have the expected quantile loss:
$$\expect_{\tau\sim\U([0,1])} \big[g_\tau(Q(\tau)) \big]= q(P,Q) + \underbrace{\expect_{\tau\sim\U([0,1])} \big[ g_\tau(F^{-1}_P(\tau))\big]}_{\mbox{does not depend on }Q}.$$
This finishes the proof of the proposition.
\end{proof}

We observe that quantile regression is nothing else than a projection under the quantile divergence. Thus for a parametrized quantile function $\bar Q_\theta$ with corresponding distribution $Q_\theta$, the sample-based quantile regression gradient $\nabla_\theta \rho_{\tau}(X - \bar Q_\theta(\tau))$ for a sample $\tau\sim\U([0,1])$ and $X\sim P$ is an unbiased estimate of $\nabla_\theta q(P,Q_\theta)$:
\begin{align*}
    \expect\big[ \nabla_\theta  \rho_{\tau}(X -\bar Q_\theta(\tau)) \big] &= \nabla_\theta \expect_{\tau\sim\U([0,1])} \big[g_\tau(\bar Q_\theta(\tau)) \big],\\
    &= \nabla_\theta q(P, Q_\theta).
\end{align*}

\begin{figure}[t]
\begin{center}
\includegraphics[width=.8\textwidth,trim=0 1cm 0 0]{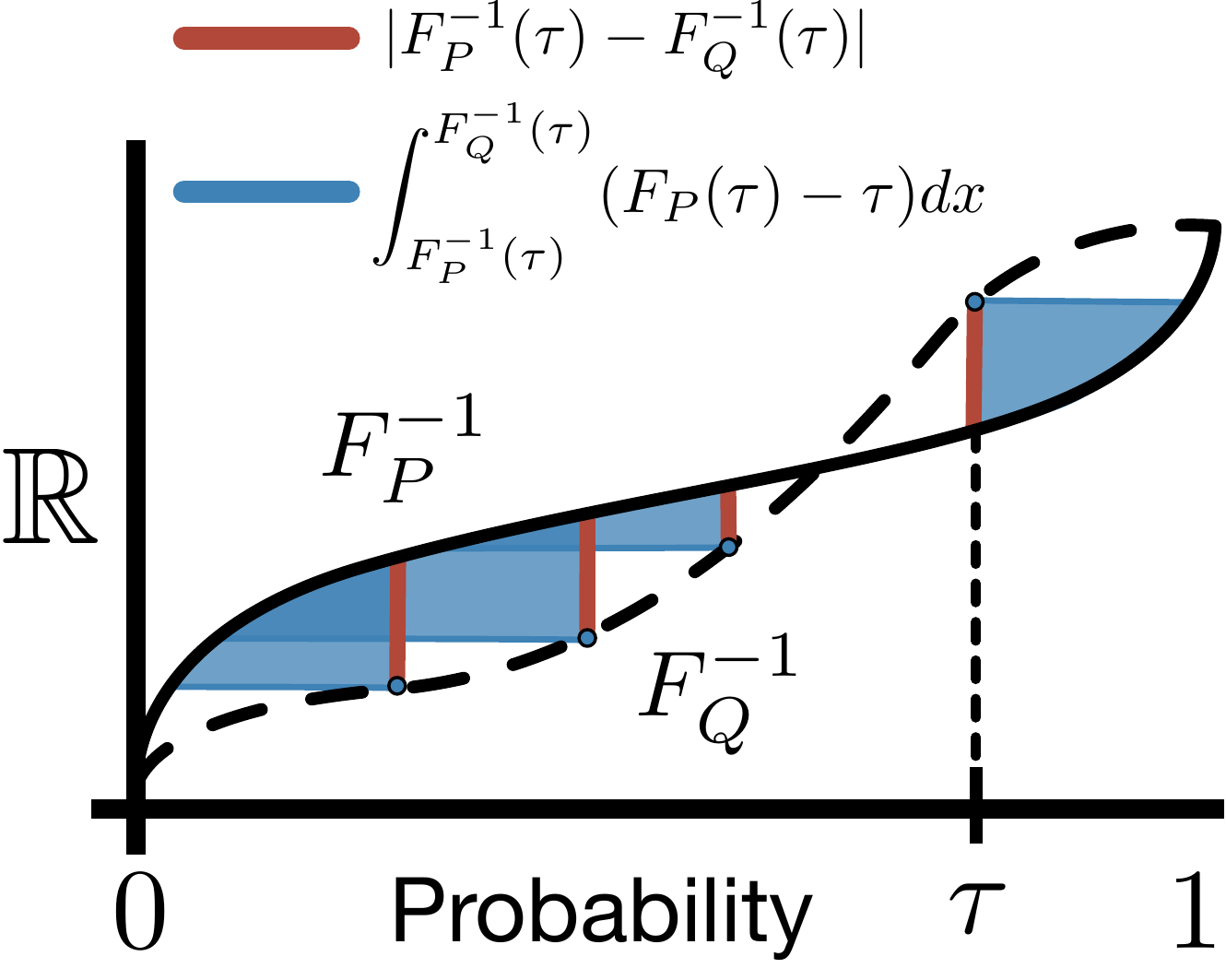}
\end{center}
\caption{Illustration of the relation between the $1$-Wasserstein metric (red) and the quantile divergence (blue).}\label{fig:qrmetric}
\end{figure}

We illustrate the relation between the $1$-Wasserstein metric and the quantile divergence in Figure~\ref{fig:qrmetric}. Notice that, for each $\tau \in [0, 1]$, while the Wasserstein measures the error between the two quantile functions, the quantile divergence measures a subset of the area enclosed between their graphs.

\section*{Network and Training Details}

All PixelCNN and PixelIQN models in Section~\ref{sec:pixeliqn} are directly based on the small and large conditional Gated PixelCNN models developed in \cite{van2016conditional}. For CIFAR-10 (Section~\ref{sec:cifar}), we are using the smaller variant with $15$ layer blocks, convolutional filters of size $5$, $128$ feature planes in each layer block, and $1024$ features planes for the residual connections feeding into the output layer of the network. For small ImageNet (Section~\ref{sec:imagenet}), we use both this model, and a larger $20$ layer version with $256$ feature planes in each layer block. 

For PixelIQN, we rescale the $\tau \in [0,1]^{3n^2}$ linearly to lie in $[-1, 1]^{3n^2}$, and input it to the network in exactly the same way as the location-dependent conditioning in \cite{van2016conditional}, that is, by applying a $1 \times 1$ convolution producing the same number of feature planes as in the respective layer block, and adding it to the output of this block prior to the gating activation.

All models on CIFAR-10 were trained for a total of $300$K steps, those on ImageNet for $400$K steps. We trained the small models with a mini-batch size of $32$, running approximately $200$K updates per day on a single NVIDIA Tesla P100 GPU, while the larger models were trained with a mini-batch size of $128$ with synchronous updates from $16$ P100 GPUs, achieving approximately half of this step rate.

\section*{Hyperparameter Tuning and Evaluation}

All quantitative evaluations of our PixelCNN and PixelIQN models are based on the Fr\'echet Inception Distance (FID) \cite{heusel2017gans}, 
\begin{equation*}
    d(x_1, x_2) = \| \mu_1 - \mu_2\|^2 + Tr(\Sigma_1 + \Sigma_2 - 2(\Sigma_1 \Sigma_2)^{1/2}),
\end{equation*}
where $(\mu_1, \Sigma_1)$ are the mean and covariance of $10,000$ samples from the model (PixelCNN or PixelIQN), and $(\mu_2, \Sigma_2)$ are the mean and covariance matrix computed over a set of $10,000$ training data points. We slightly deviate from the usual practice of using the entire training set for FID computation, as this would require an equal number ($50,000$ in the case of CIFAR-10) of samples to be drawn from the model, which is computationally very expensive for autoregressive models like PixelCNN or PixelIQN. 

We use Polyak averaging \cite{polyak1992acceleration}, keeping an exponentially weighted average over past parameters with a weight of $0.9999$. This average is being loaded instead of the model parameters before samples are generated, but never used for training.

To tune our small PixelCNN and PixelIQN models, we performed a hyperparameter search over $500$ hyperparameter configurations for each model, each configuration evaluated after $100$K training steps on CIFAR-10, based on its FID score computed on a small set of $2500$ generated samples.

For PixelCNN, the parameter search involved choosing from RMSProp, Adam, and SGD as the optimizer, and tuning the learning rate, involving both constant and decaying learning rate schedules. As a result we settled on the RMSProp optimizer and a set of three possible learning rate regimes, namely a constant learning rate of $10^{-4}$ or $3\cdot10^{-5}$, and a decaying learning rate regime: $10^{-4}$ in the first $120$K, $3\cdot10^{-5}$ for the next $60$K, and $10^{-5}$ for the remaining training steps. We found the first of these to work best on ImageNet, and the decaying schedule to work best on CIFAR-10, and only report the best model for each dataset.

For PixelIQN, the parameter search included the above (but with constant learning rates only), and additionally a sweep over a range of values for the Huber loss parameter $\kappa$ (Equation \ref{eqn:huberquantile}). As a result, we used Adam with a constant learning rate of $10^{-4}$ for all PixelIQN model variants on both datasets, and set $\kappa = 0.002$. We found that the model is not sensitive to this hyperparameter, but performs somewhat worse if the regular quantile regression loss is used instead of the Huber variant.

\section*{AIQN-VAE}

\begin{figure*}[t]
\begin{center}
\includegraphics[width=\textwidth]{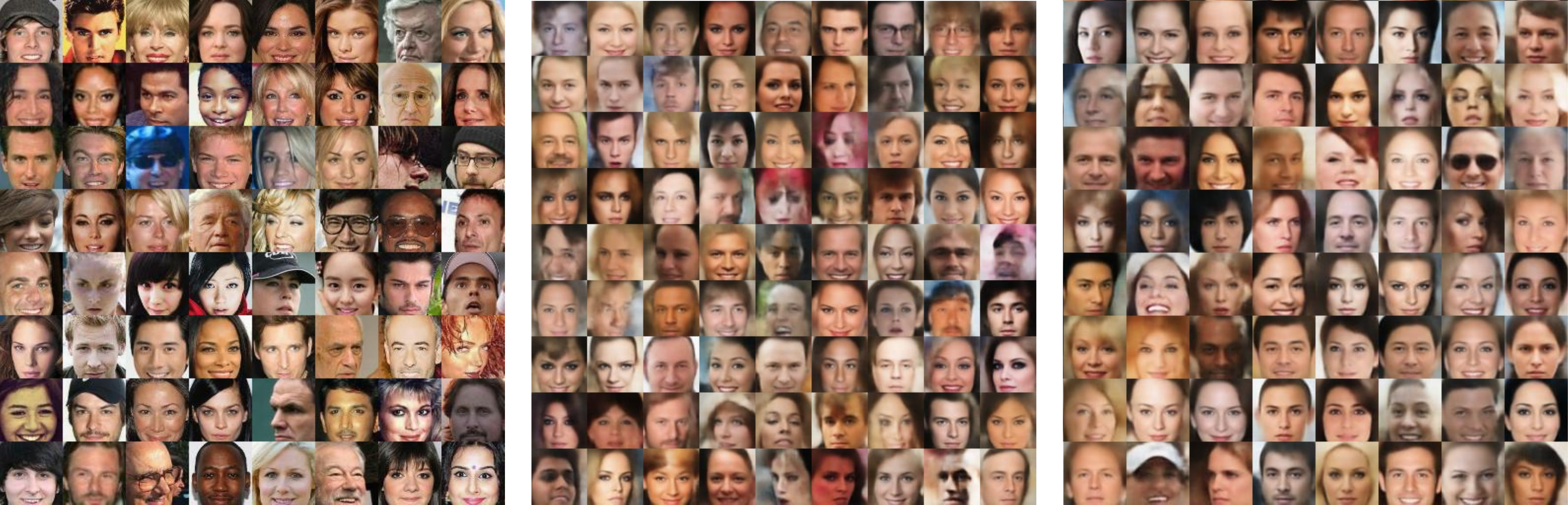}
\end{center}
\caption{CelebA 64x64: Real example images (left), samples generated by VAE (center), and samples generated by AIQN-VAE (right).}
\label{fig:iqnvae_celeba}
\end{figure*}

One potential drawback to PixelIQN presented above, shared by PixelCNN and more generally autoregressive models, is that due to their autoregressive nature sampling can be extremely time-consuming. This is especially true as the resolution of images increases. Although it is possible to partially reduce this overhead with clever engineering, these models are inherently much slower to sample from than models such as GANs and VAEs. In this section, we demonstrate how PixelIQN, due to the continuous nature of the quantile function, can be used to learn distributions over lower-dimensional, latent spaces, such as those produced by an autoencoder, variational or otherwise. Specifically, we use a standard VAE, but simultaneously train a small AIQN to model the training distribution over latent codes. For sampling, we then generate samples of the latent distribution using AIQN instead of the VAE prior.

This approach works well for two reasons. First, even a thoroughly trained VAE does not produce an encoder that fully matches the Gaussian prior. Generaly, the data distribution exists on a non-Gaussian manifold in the latent space, despite the use of variational training. Second, unlike existing methods, AIQN learns to approximate the full continuous-valued distribution without discretizing values or making prior assumptions about the value range or underlying distribution.


We can see similarities between this approach and two other recent publications. First, the $\alpha$-GAN proposed by \citet{rosca2017variational}. In both, there is an attempt to sample from the true latent distribution of a VAE-like latent variable model. 
In the case of $\alpha$-GAN this sampling distribution is trained using a GAN, while we propose to learn the distribution using quantile regression. The similarity makes sense considering AIQN shares some of the benefits of GANs. Unlike in this related work, we have not replaced the KL penalty on the latent representation. It would be an interesting direction for future research to explore a similar formulation. Generally, the same trade-offs between GANs and AIQN should be expected to come into play here just as they do when learning image distributions. Second, the VQ-VAE model \cite{van2017neural}, learns a PixelCNN model of the (discretized) latent space. Here, especially in the latent space, distribution losses respecting distances between individual points is more applicable than likelihood-based losses.

Let $e\colon \bR^{n} \to \bR^m$ and $d\colon \bR^m \to \bR^{n}$ be the mean of the encoder and decoder respectively of a VAE, although other forms of autoencoder could be substituted. Then, let $Q_\tau$ be an AIQN on the space $\bR^m$. During training we propose to minimize
\begin{equation*}
    \cL(x) = \cL_{VAE}(x) + \expect_{\tau \sim \U([0, 1]^m)} \rho_\tau^{\kappa}(e(x) - Q_\tau),
\end{equation*}
where $\cL_{VAE}$ is the standard VAE loss function. Then, for generation, we sample $\tau \sim \U([0, 1]^m)$, and reparameterize this sample through the AIQN and the decoder to produce $y = d(Q_\tau)$, a sample from the approximated distribution. We call this simple combination the AIQN-VAE.

\subsection*{CelebA}

We demonstrate the AIQN-VAE using the CelebA dataset \cite{liu2015faceattributes}, at resolution $64 \times 64$. We modified an open source VAE implementation\footnote{https://github.com/LynnHo/VAE-Tensorflow} to simultaneously train the AIQN on the output of the VAE encoder, with Polyak averaging \cite{polyak1992acceleration}
of the AIQN weights. We reduce the latent dimension to $32$, as our purpose is to investigate the use of VAEs to learn in lower-dimensional latent spaces. The AIQN used three fully connected layers of width $512$ with ReLU activations. For the AIQN-VAE, but not the VAE, we lowered latent dimension variance to $0.1$ and the KL-term weight to $0.5$. It has been observed that in this setting the VAE prior alone will produce poor samples, thus high-quality samples will only be possible by learning the latent distribution. Figure~\ref{fig:iqnvae_celeba} shows samples from both a VAE and AIQN-VAE after $200K$ training iterations.
Both models may be expected to improve with further training, however, we can see that the AIQN-VAE samples are frequently clearer and less blurry than those from the VAE.

\begin{figure*}
\begin{center}
\includegraphics[width=\textwidth]{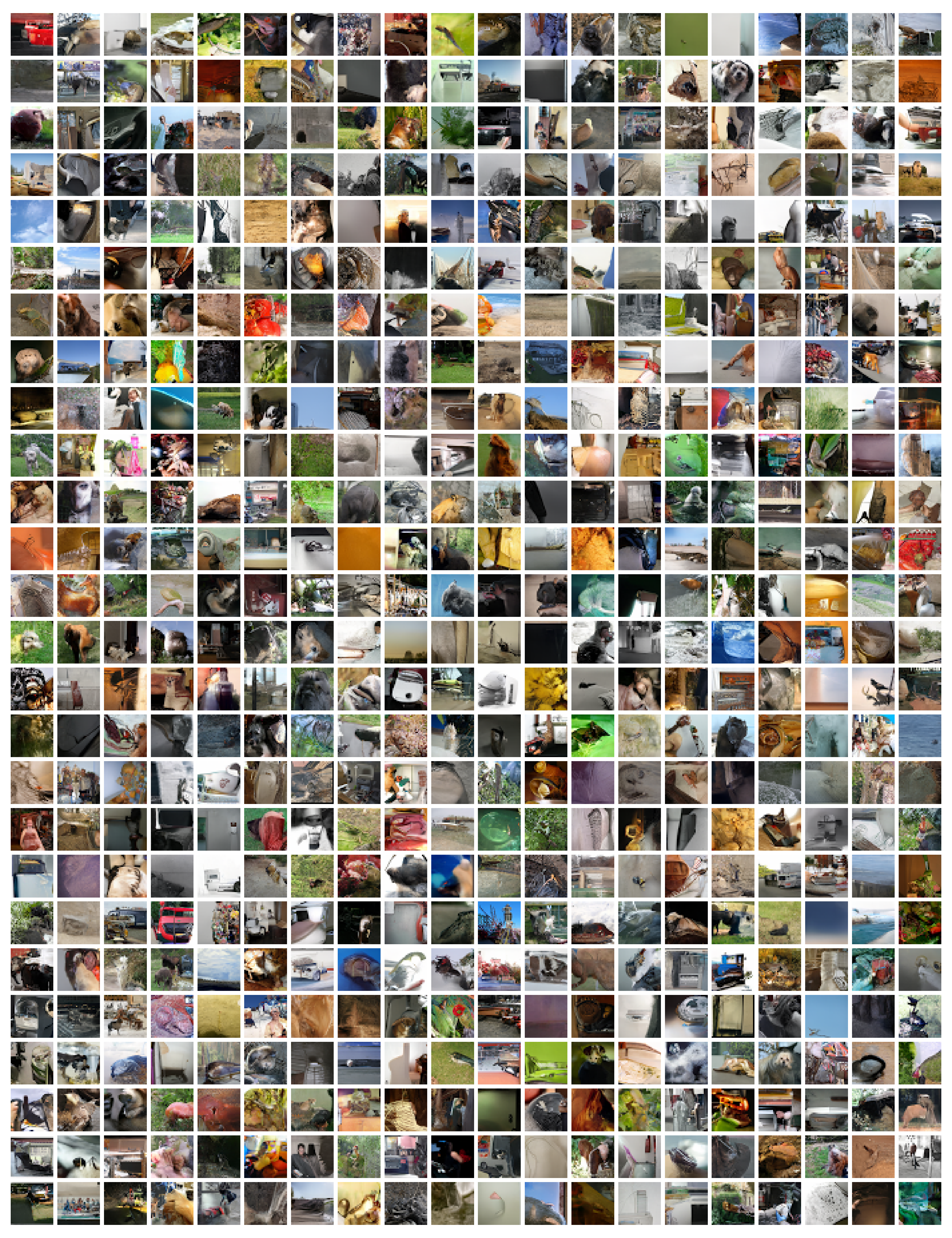} 
\end{center}
\caption{Samples from PixelIQN trained on small ImageNet.}\label{fig:imagenet_more_samples}
\end{figure*}

\begin{figure*}
\begin{center}
\includegraphics[width=0.93\textwidth]{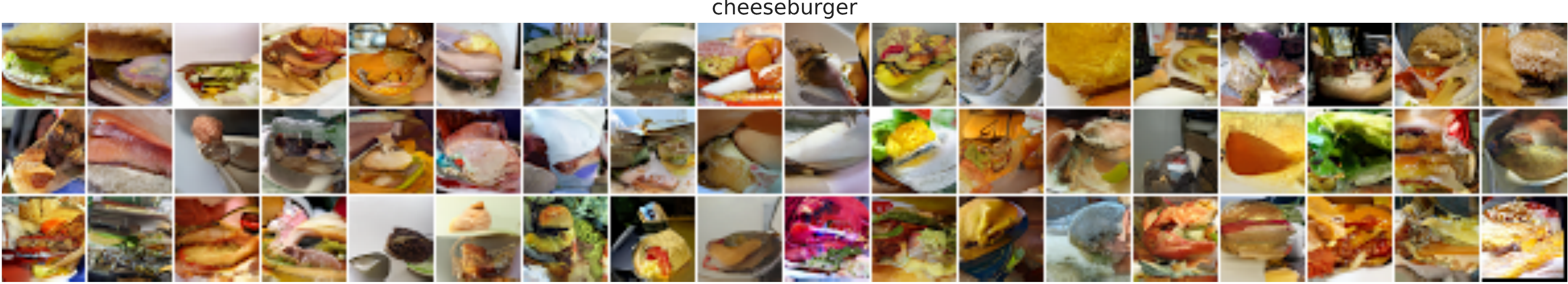}
\includegraphics[width=0.93\textwidth]{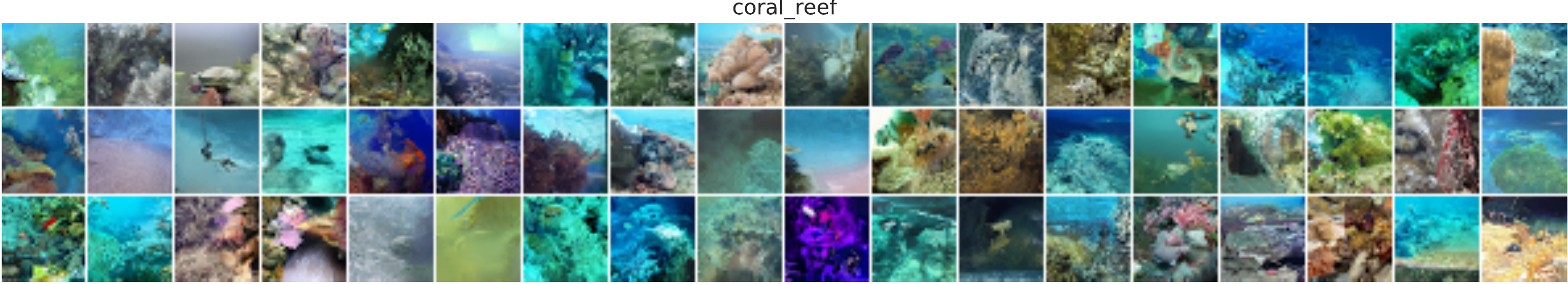}
\includegraphics[width=0.93\textwidth]{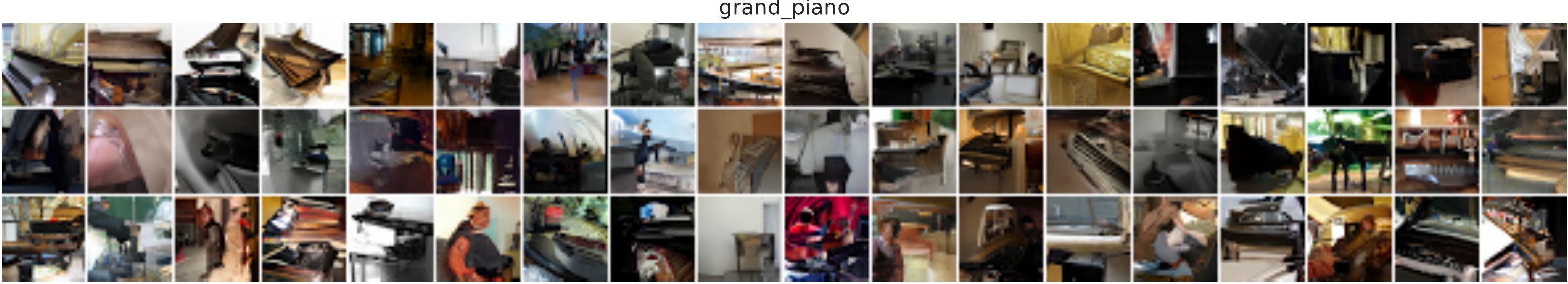}
\includegraphics[width=0.93\textwidth]{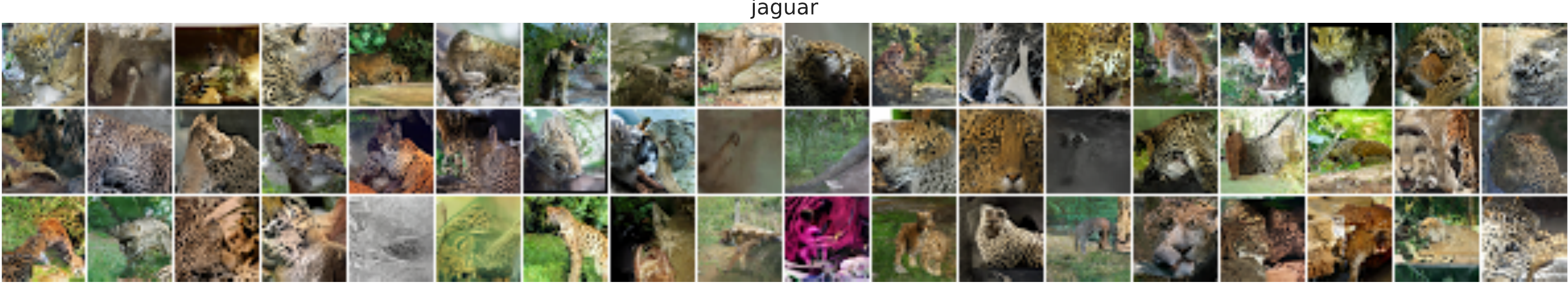}
\includegraphics[width=0.93\textwidth]{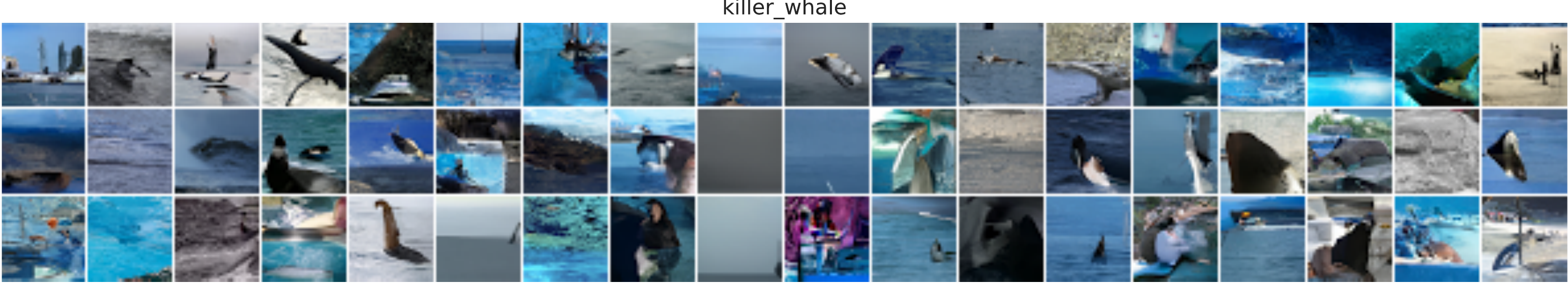}
\includegraphics[width=0.93\textwidth]{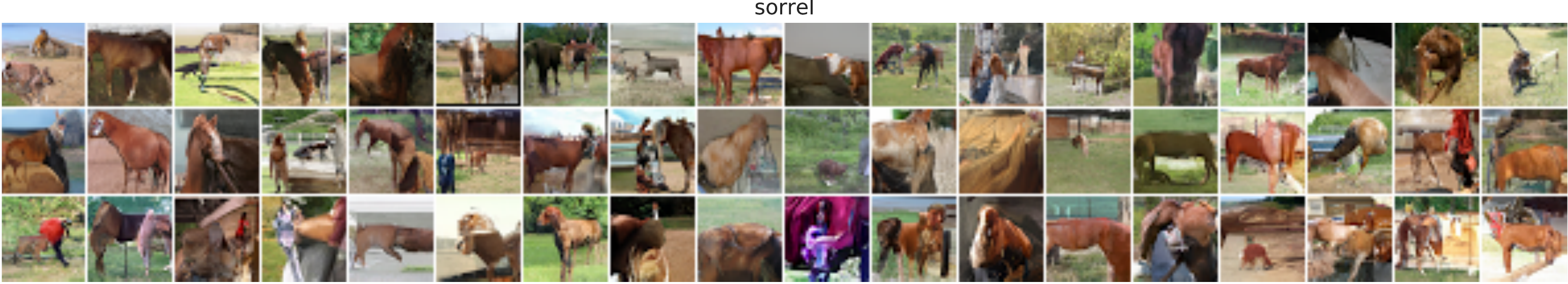}
\includegraphics[width=0.93\textwidth]{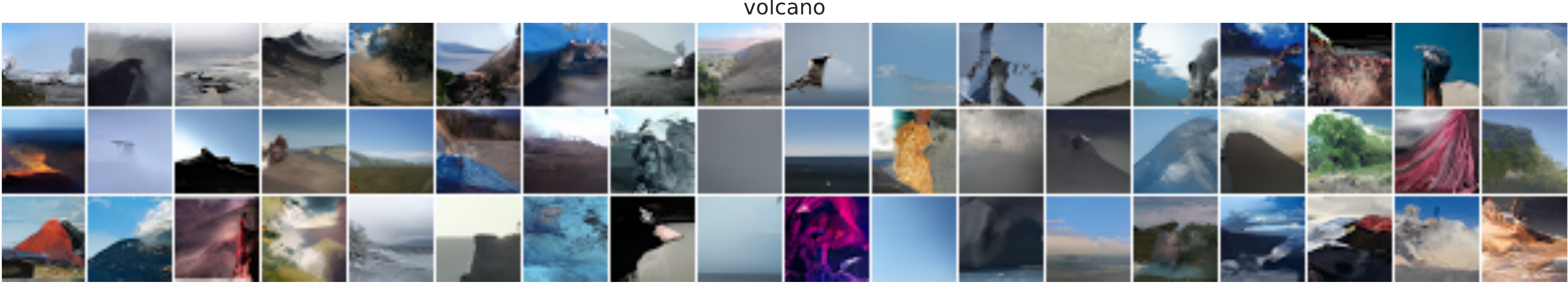}
\includegraphics[width=0.93\textwidth]{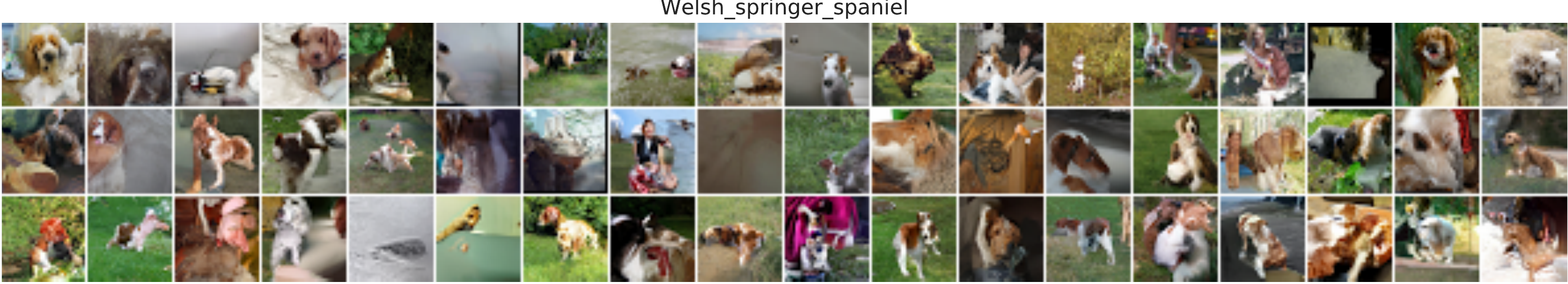}
\end{center}
\caption{Class-conditional samples from PixelIQN trained on small ImageNet.}\label{fig:cc_more_samples}
\end{figure*}

\begin{figure*}
\begin{center}
\includegraphics[width=.85\textwidth]{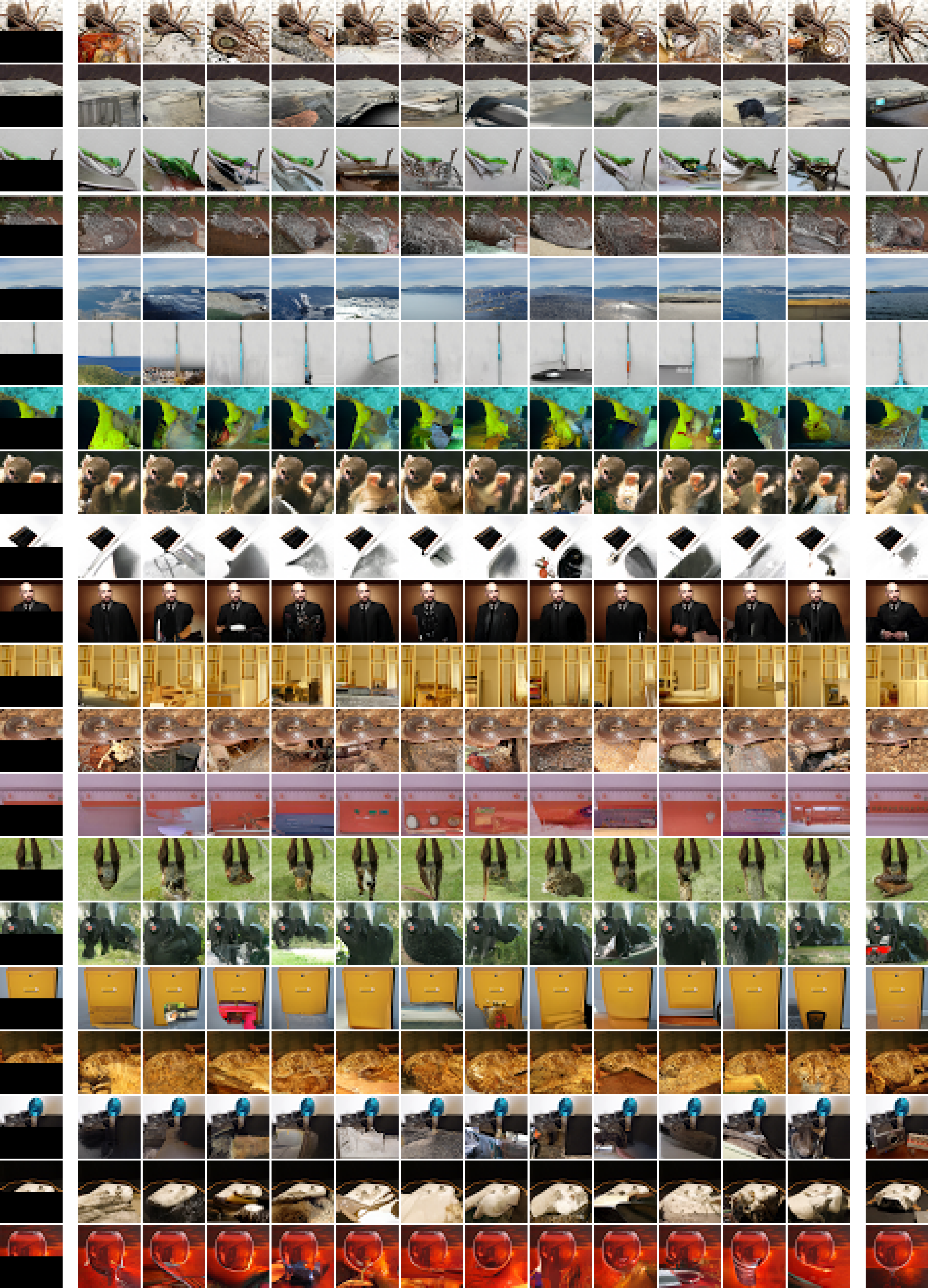}
\end{center}
\caption{Inpainting. Left column: Masked image given to the network. Middle columns: alternative image completions by the PixelIQN network for different values of $\tau$. Right column: Original image.}\label{fig:imagenet_more_inpainting}
\end{figure*}


\end{document}